\newcommand{\R}{\mathbb{R}}
\newcommand{\Z}{\mathbb{Z}}
\newcommand{\N}{\mathbb{N}}
\newcommand{\E}{\mathbb{E}}
\newcommand{\lip}{\,\text{Lip}}
\renewcommand{\P}{\mathbb{P}}
\newcommand{\F}{\mathcal{F}}
\renewcommand{\hat}[1]{\widehat{#1}}
\newcommand{\eps}{\varepsilon}
\newtheorem{theorem}{Theorem}
\newtheorem*{theorem*}{Theorem}
\theoremstyle{plain}
\title{Anomaly detection and classification for streaming data using PDEs\thanks{The second author was supported by NSF grant DMS-1500829.}}
\author{Bilal Abbasi\thanks{Department of Mathematics and Statistics, McGill University. ({\tt bilal.abbasi.ba@gmail.com})} \and Jeff Calder\thanks{School of Mathematics, University of Minnesota. ({\tt jcalder@umn.edu})}\and Adam M. Oberman\thanks{Department of Mathematics and Statistics, McGill University. ({\tt adam.oberman@mcgill.ca})}}
\begin{document}

\maketitle

\begin{abstract}
Nondominated sorting, also called Pareto Depth Analysis (PDA), is widely used in multi-objective optimization and has recently found important applications in multi-criteria anomaly detection. Recently, a partial differential equation (PDE) continuum limit was discovered for nondominated sorting leading to a very fast approximate sorting algorithm called \emph{PDE}-based ranking. We propose in this paper a fast real-time streaming version of the PDA algorithm for anomaly detection that exploits the computational advantages of PDE continuum limits. Furthermore, we derive new PDE continuum limits for sorting points within their nondominated layers and show how the new PDEs can be used to classify anomalies based on which criterion was more significantly violated. We also prove statistical convergence rates for PDE-based ranking, and present the results of numerical experiments with both synthetic and real data.
\end{abstract}

\section{Introduction}
\label{intro}

Sorting, or ordering of multivariate data is an important and challenging problem in many fields of computational science. Since there is no canonical linear ordering for multivariate data, many different notions of ordering have been proposed in the literature~\cite{liu1999multivariate}, and the problem is very much application dependent.

In the context of multiobjective optimization, ordering by dominance relations has achieved prominence. A general multiobjective optimization problem involves finding among a set of feasible solutions those that minimize a collection of objectives. One feasible solution is said to \emph{dominate} another if it gives a smaller value for \emph{every} objective. The collection of feasible solutions that are not dominated by any other solution are called \emph{Pareto-optimal} or \emph{nondominated}. In the database community the Pareto-optimal solutions are called the \emph{skyline} of the dataset~\cite{kossmann2002}. 

The notion of Pareto-optimality is widely used in evolutionary algorithms for multiobjective optimization~\cite{srinivas1994}, such as the Nondominated Sorting Genetic Algorithm (NSGA-II)~\cite{deb2002}, the Strength Pareto Evolutionary Algorithm (SPEA)~\cite{zitzler2001spea2,zitzler1999multiobjective}, and the Pareto envelope-based selection algorithm (PESA)~\cite{corne2000pareto}, among many others (see~\cite{ghosh2004evolutionary} for a survey). Central to many of these algorithms is the assignment of a fitness to each feasible solution based on sorting all the feasible solutions via dominance. 

The NSGA-II algorithm assigns its fitness level via \emph{nondominated sorting}, sometimes called \emph{Pareto Depth Analysis} (PDA), which arranges the feasible solutions into layers by repeatedly peeling off the Pareto-optimal solutions. Nondominated sorting has also found applications in gene selection and ranking~\cite{hero2002}, anomaly detection~\cite{hsiao2012,hsiao2015}, and multiquery image retrieval~\cite{hsiao2015pareto}. As it turns out, nondominated sorting is equivalent to the \emph{longest chain problem}, which has a long history in combinatorics and probability~\cite{bollobas1988,hammersley1972,ulam1961}.
 
Due to the wide use of NSGA-II, there has been significant interest in fast algorithms for nondominated sorting~\cite{deb2002,jensen2003,fortin2013generalizing}. Recently, Calder et al.~\cite{calder2014} established a continuum limit for nondominated sorting that corresponds to solving a Hamilton-Jacobi partial differential equation (HJE). This result shows that there is a simple asymptotic structure underlying nondominated sorting, and this opens the door to extremely fast algorithms based on exploiting this structure. Calder et al.~\cite{calder2015PDE} recently proposed a sublinear algorithm for approximate nondominated sorting called \emph{PDE-based ranking} that is based on estimating the distribution of the data and solving the HJE numerically.

The purpose of this paper is twofold. First, we show how to use PDE-based ranking to significantly improve the performance of algorithms that are based on nondominated sorting. To illustrate this in a concrete setting, we propose a new real-time version of the multi-criteria PDA anomaly detection algorithm from~\cite{hsiao2015} that uses PDE-based ranking in place of nondominated sorting. The computational complexity is reduced by an order of magnitude (from quadratic to linear), and this allows the model to be updated in real-time upon the acquisition of each additional data sample. We also prove in Theorem \ref{thm:rate} a statistical convergence rate for the PDE-based ranking continuum approximation. 

Second, we present a new partial differential equation (PDE) continuum limit for ordering solutions \emph{within} the layers generated by nondominated sorting in two dimensions. This new continuum limit allows us to efficiently explore the tradeoff between multiple objectives. In the context of multi-criteria anomaly detection, we show how to use this PDE continuum limit to classify anomalies based on which criterion is more significantly violated. We give a derivation of these new continuum limits and present a convergence analysis. In both cases, we trade exact algorithms of high computational complexity for fast approximate algorithms that are convergent, meaning that the error in the approximation goes to zero as the sample size grows.

This paper is organized as follows. In Section \ref{sec:prev} we review the continuum limit for nondominated sorting from~\cite{calder2014}, and present the PDA multicriteria anomaly detection algorithm from \cite{hsiao2015}. In Section \ref{sec:new}, we derive two new PDE continuum limits for ordering points within Pareto fronts, in Section \ref{sec:schemes} we construct fast upwind schemes for solving these PDEs numerically. In Section \ref{sec:algorithm}, we present our fast PDE-based anomaly detection and classification algorithm in the context of streaming data, and in Section \ref{sec:numerics} we present the results of numerical experiments for both real and synthetic data streams.

\section{Previous work}
\label{sec:prev}

\subsection{Nondominated sorting}
\label{sec:ndom}

Nondominated sorting arranges a set of points in $\R^d$ into layers by repeatedly peeling off the coordinatewise minimal points. The coordinatewise partial order on $\R^d$ is defined by
\[x \leqq y \ \ \iff \ \ \forall i \ \ x_i \leq y_i \ \ \ \ (x,y \in \R^d).\]
 Let $S_n=\{X_1,\dots,X_n\}\subset \R^d$ be a collection of $n$ points in $\R^d$. We say a point $X_i \in S_n$ is minimal (or nondominated), if no other non-identical point in $S_n$ is smaller with respect to the coordinatewise partial order $\leqq$. The first nondominated layer, denoted $\F_1$, consists of the minimal points from $S_n$. The second nondominated layer, denoted $\F_2$, consists of the minimal points from $S_n\setminus \F_1$ and the $k^{\rm th}$ layer is defined recursively by
\[\F_k = \text{Minimal points from } S_n \setminus (\F_1\cup \cdots \cup \F_{k-1}).\]

Nondominated sorting refers to the process of arranging the set $S_n$ into the nondominated layers $\F_1,\F_2,\F_3,\dots$, which are also called \emph{Pareto fronts}. See Figure \ref{fig:demo} for a demonstration of nondominated sorting applied to random points. The index of the Pareto front that a point $X_i \in S_n$ lies on is often called the \emph{Pareto depth} or \emph{Pareto rank} of $X_i$, and provides the fitness score for the NSGA-II algorithm. In the context of multiobjective optimization, $d$ represents the number of objectives.

\begin{figure}
\centering
\subfigure[$n=50$]{\includegraphics[clip=true,trim=20 20 20 20,height=0.23\textwidth]{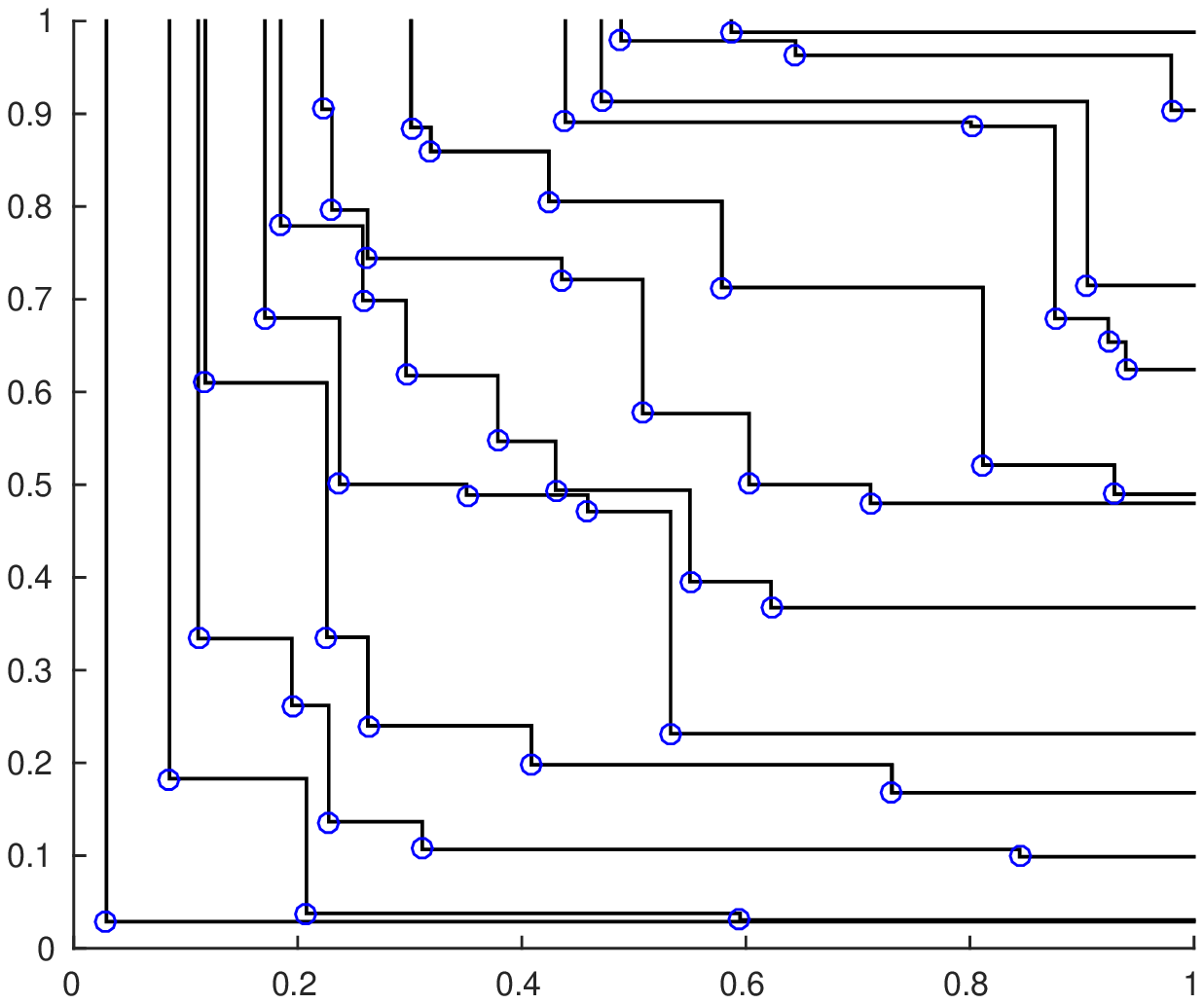}}
\subfigure[$n=10^4$]{\includegraphics[clip=true,trim=20 20 20 20,height=0.23\textwidth]{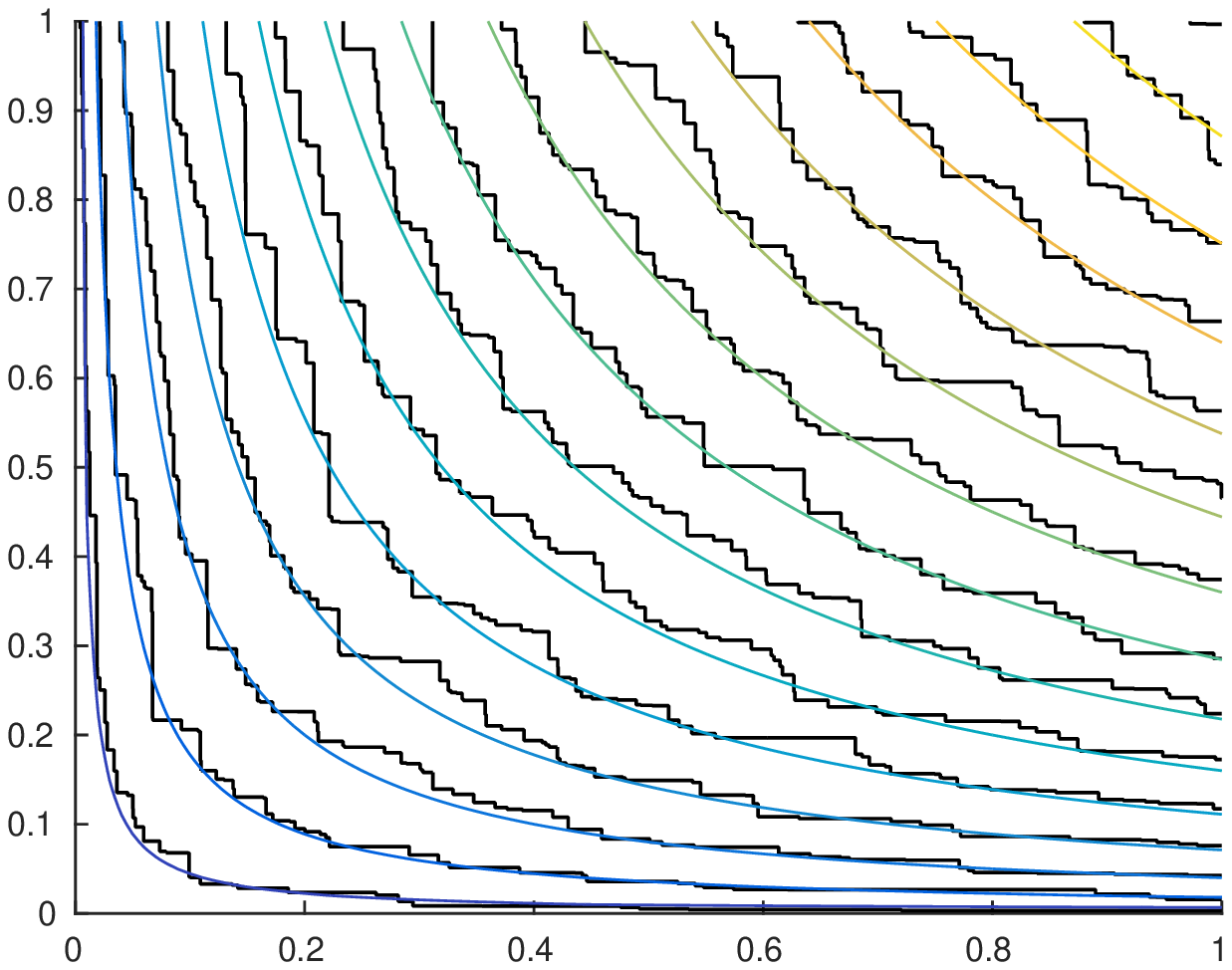}}
\subfigure[$n=10^6$]{\includegraphics[clip=true,trim=20 20 20 20,height=0.23\textwidth]{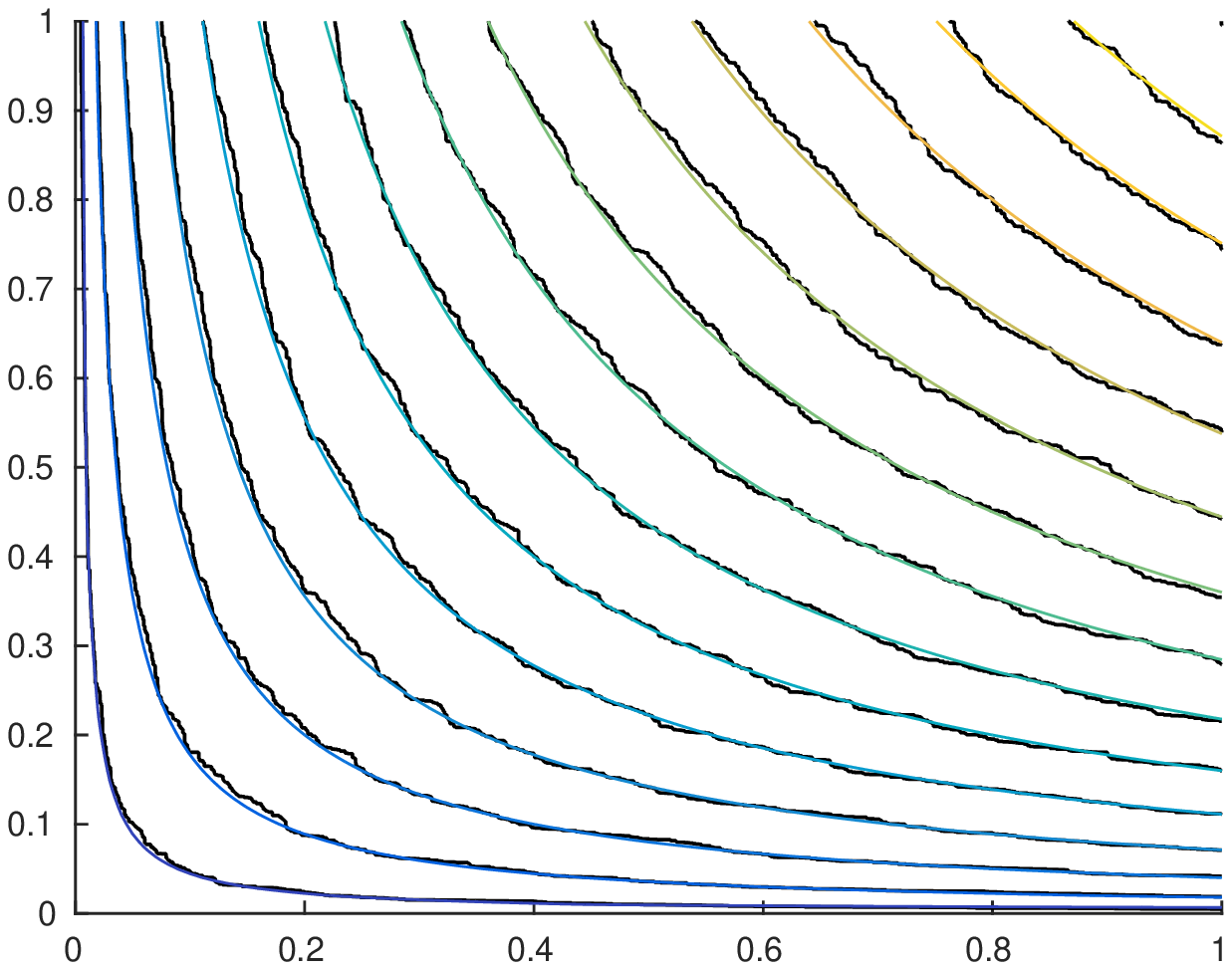}}
\caption{Illustration of nondominated sorting of \emph{i.i.d.}~random variables $X_1,\dots,X_n$ drawn from the uniform distribution on $[0,1]^2$. }
\label{fig:demo}
\end{figure}
The original nondominated sorting algorithm proposed in \cite{deb2002} requires $O(dn^2)$ memory and operations. The quadratic memory complexity in $n$ renders the algorithm intractable for even moderate $n$. Jensen~\cite{jensen2003} proposed an algorithm with asymptotic complexity of $O(n(\log n)^{d-1})$ as $n\to \infty$.  The two dimensional version of Jensen's algorithm was discovered independently in the combinatorics community by Felsner and Wernisch~\cite{felsner1999}. 
Fortin et al.~\cite{fortin2013generalizing} recently made some improvements to Jensen's algorithm regarding its treatment of points with identical coordinates. The exponential complexity of the Jensen-Fortin algorithm with respect to $d$ suggests it may not be useful for high dimensional problems. However, recent numerical results have suggested a better asymptotic complexity as $d\to \infty$ with $n$ fixed~\cite{hsiao2015}. We also mention there are several other notable approaches to nondominated sorting~\cite{shi2005fast,caoMATLAB,fang2008efficient}.

Calder et al.~\cite{calder2014} discovered a Hamilton-Jacobi equation continuum limit for nondominated sorting. The result applies in the setting where $S_n=\{X_1,\dots,X_n\}$ is a sequence of \emph{i.i.d.}~random variables with probability density $f$ on the unit box $(0,1)^d$. Define  the \emph{Pareto depth function} $U_n:S_n \to \N$ associated with nondominated sorting of $S_n$ by  $U_n(X_i) = k$ if and only if $X_i \in \F_k$. The following continuum limit was established in \cite{calder2014,calder2015direct}.
\begin{theorem}[HJE Continuum Limit]\label{thm:main}
With probability one
\[n^{-\frac{1}{d}}U_n \longrightarrow C_d u\ \ \ \text{uniformly on } [0,1]^d \text{ as }  n\to \infty\]
where $C_d>0$ is a constant depending only on $d$, and $u\in C^{0,\frac{1}{d}}([0,1]^d)$ is the unique nondecreasing viscosity solution of the Hamilton-Jacobi equation
\begin{equation}\label{eq:HJE}
\left\{\begin{aligned}
u_{x_1}\cdots u_{x_d} &= f&&\text{in } (0,1]^d\\
u&=0&&\text{on } \partial (0,1)^d\setminus (0,1]^d.
\end{aligned}\right.
\end{equation}
\end{theorem}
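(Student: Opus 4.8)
The plan is to convert nondominated sorting into a last-passage / longest-chain problem, pass to a variational continuum limit using the classical asymptotics for longest chains among i.i.d.\ points, and then show the resulting value function is the unique nondecreasing viscosity solution of \eqref{eq:HJE}.

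First I would reformulate the problem. One checks the elementary fact that $U_n(X_i)$ equals the length of the longest chain (totally $\leqq$-ordered subset of $S_n$) with maximal element $X_i$, and then extends $U_n$ to all of $[0,1]^d$ by letting $U_n(x)$ be the length of the longest chain contained in $S_n\cap\{y:y\leqq x\}$. This extension is nondecreasing in each coordinate and satisfies an approximate dynamic programming principle: a longest chain reaching $y$ may be split at any $x\leqq y$, and inside a small box $\prod_i[a_i,b_i]$ the sample points form, conditionally, a uniform i.i.d.\ sample of size $\mathrm{Binomial}\!\left(n,\int_{[a,b]}f\right)\approx n\,f(a)\prod_i(b_i-a_i)$.

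Next I would invoke the Bollob\'as--Winkler theorem (the $d$-dimensional Ulam--Hammersley result): among $m$ i.i.d.\ uniform points in a box the longest chain, divided by $m^{1/d}$, converges almost surely to a dimensional constant $c_d$. Partitioning $(0,1)^d$ into a fine grid of cubes on which $f$ is nearly constant, applying this estimate cube-by-cube, and summing contributions along monotone staircases of cubes, one is led to the variational formula
\[
\lim_{n\to\infty} n^{-1/d} U_n(x) \;=\; c_d \sup_{\gamma}\int_0^1 \left(f(\gamma(s))\,\gamma_1'(s)\cdots\gamma_d'(s)\right)^{1/d}ds,
\]
the supremum over nondecreasing Lipschitz paths $\gamma:[0,1]\to[0,1]^d$ with $\gamma(0)=0$ and $\gamma(1)=x$; in the uniform case $f\equiv1$, H\"older's inequality shows the supremum is $(x_1\cdots x_d)^{1/d}$, matching Figure~\ref{fig:demo}. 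Three points require care: (i) producing matching upper and lower bounds in the patching argument, controlling the error from replacing $f$ by a piecewise-constant approximation and from the optimal chains not respecting the grid; (ii) upgrading convergence in expectation to almost sure convergence --- since the longest chain changes by at most one when a single sample point is moved, this follows from sharp concentration at scale $n^{1/d}$ (e.g.\ Talagrand's convex-distance inequality after Poissonization, or Kingman's subadditive ergodic theorem applied to the superadditive array of longest chains on nested boxes); and (iii) upgrading pointwise to uniform convergence, which follows from the monotonicity of $U_n$ together with the $C^{0,1/d}$ regularity the limit inherits from the $m^{1/d}$ scaling.

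Finally I would identify the limit with the PDE solution. Writing it as $C_d u(x)$ with $C_d=c_d/d$, a Hopf--Lax / dynamic-programming argument shows $u$ satisfies $Du(x)\cdot v\ge d\,(f(x)\,v_1\cdots v_d)^{1/d}$ for all $v\ge0$ with equality along the optimal direction; taking the infimum over $v>0$ and using the arithmetic--geometric-mean inequality turns this into $u_{x_1}\cdots u_{x_d}=f$, while $u=0$ on $\partial(0,1)^d\setminus(0,1]^d$ is immediate from the formula, since a path to a point with a vanishing coordinate must be constant in that coordinate. I expect the main obstacle to be the uniqueness / comparison principle for \eqref{eq:HJE}: the equation is degenerate, the Hamiltonian $p\mapsto p_1\cdots p_d$ is neither coercive nor convex and only behaves well (its $d$-th root is concave) on the positive orthant, the domain has corners, and the Dirichlet data sit on the nonstandard part $\partial(0,1)^d\setminus(0,1]^d$ of the boundary. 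The remedy is to work within the class of \emph{nondecreasing} sub- and supersolutions, in which the problem becomes well posed, and to run a doubling-of-variables argument tailored to this class and to the one-sided boundary condition. An alternative to the variational route, as in \cite{calder2015direct}, bypasses the global formula: using only the local uniform-case estimate and the approximate dynamic programming principle, one shows directly that the upper and lower relaxed limits of $n^{-1/d}U_n$ are respectively a viscosity sub- and supersolution of \eqref{eq:HJE}, and the same comparison principle then forces them to coincide with $C_d u$.
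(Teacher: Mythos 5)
Your proposal is correct and follows essentially the same route as the proof in the cited reference \cite{calder2014} (the present paper only quotes Theorem~\ref{thm:main}): the longest-chain reformulation, the Bollob\'as--Winkler asymptotics combined with a grid-patching and concentration argument to obtain the variational formula, and the identification of the value function as the unique nondecreasing viscosity solution of \eqref{eq:HJE} via the duality $\inf\{p\cdot v : v>0,\ (v_1\cdots v_d)^{1/d}=1\}=d(p_1\cdots p_d)^{1/d}$ together with a comparison principle restricted to nondecreasing sub- and supersolutions. Your closing alternative, verifying directly that the relaxed upper and lower limits of $n^{-1/d}U_n$ are sub- and supersolutions, is precisely the approach of \cite{calder2015direct}.
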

Here $u_{x_i} := \frac{\partial u}{\partial x_i}$ denotes the partial derivative of $u$ with respect to $x_i$, and by nondecreasing we mean that $u_{x_i}\geq 0$ for all $i$. We note that when $f=1$ is the uniform density, the solution of \eqref{eq:HJE} is
\begin{equation}\label{eq:exactu}
u(x) = d(x_1\cdots x_d)^\frac{1}{d}.
\end{equation}
 Figure \ref{fig:demo} gives an illustration of this continuum limit when $X_1,\dots,X_n$ are independent and uniformly distributed. The continuum limit in Theorem \ref{thm:main} states that the Pareto fronts converge to the level sets of the viscosity solution $u$ of \eqref{eq:HJE}. While the value of $C_d$ is not needed for sorting, we should mention that it is known only in dimension $d=2$, in which case $C_2=1$~\cite{logan1977,vershik1977}.

Calder et al.~\cite{calder2015PDE} proposed a fast algorithm for approximate nondominated sorting called \emph{PDE-based ranking} that is based on estimating the density function $f$ from a small subset of the data $X_1,\dots,X_n$ and then solving \eqref{eq:HJE} numerically. PDE-based ranking can drastically reduce the computation time of nondominated sorting in low dimensions ($d=2,3$) while maintaining very high sorting accuracy.

Let us say a few words about viscosity solutions. Hamilton-Jacobi equations like \eqref{eq:HJE} generally do not admit classical solutions (i.e., continuously differentiable solutions) due to the possibility of crossing characteristics. There are, however, infinitely many functions $u$ that are differentiable almost everywhere and satisfy \eqref{eq:HJE} at each point of differentiability. The notion of viscosity solution selects from among these infinitely many feasible solutions the one that is `physically correct' for a \emph{very} wide range of problems. The viscosity solution is correct in this context because it captures the continuum limit of the Pareto depth function~\cite{calder2014}. 

The notion of viscosity solution is based on the maximum principle and enjoys very strong stability properties. It is a notion of weak solution that allows merely continuous functions to be solutions of a fully nonlinear PDE. While viscosity solutions may not possess the derivatives appearing in the equation in the classical sense, the reader will not lose much in the way of understanding by assuming that $u$ is continuously differentiable. In the context of viscosity solutions, the maximum principle is used to prove a comparison principle, which says that subsolutions lie below supersolutions provided their boundary conditions do as well. The reason the notion of viscosity solution is correct in this context is that nondominated sorting also obeys a comparison principle; namely, if we introduce new points into our data set (i.e., we increase the point density), then the Pareto depth function increases as well. For more details on viscosity solutions we refer the reader to \cite{bardi1997}. 

\subsection{Anomaly detection}
\label{sec:anomaly}

To illustrate the computational advantages of PDE-based ranking, we consider a concrete application of nondominated sorting to anomaly detection~\cite{hsiao2015}. Anomaly detection refers to the problem of detecting patterns in data that deviate from the expected behavior. It is an important and challenging problem with a wide array of applications, including computer intrusion detection, video surveillance, credit card fraud, and biometrics~\cite{hodge2004survey,chandola2009anomaly}. Many anomaly detection algorithms rely on the availability of a measure of distance (or similarity) between data samples, and look for anomalies by finding samples that are far from their nearest neighbors (see \cite{hsiao2015} and references therein). These algorithms are usually called \emph{similarity-based}, and are widely used due to their simplicity and robustness.

In contrast, feature-based algorithms seek to embed the data into a relatively low dimensional Euclidean space and make use of the ambient Euclidean (or other) distance to detect anomalies. Techniques used for feature-based algorithms include support vector machines (SVM), clustering, neural networks, and statistical approaches based on density estimation~\cite{chandola2009anomaly}. In this paper we consider \emph{similarity-based} approaches.

In many applications, multiple measures of similarity may be required to detect certain types of anomalies. For example, when tracking pedestrians in video surveillance, one criterion may correspond to differences in individual walking speeds, while another might correspond to differences in the shapes of trajectories. Using multiple criteria allows one to detect a wider range of anomalies than could be obtained from a single criterion alone.  

Hsiao et al.~\cite{hsiao2015} proposed an algorithm for multi-criteria anomaly detection that integrates the information from multiple similarity measures via nondominated sorting (or Pareto Depth Analysis (PDA)).  Suppose we have a training set consisting of $N$ objects $Y_1,\dots,Y_N$ and $d$ measures of similarity $c_1,\dots,c_d$ for comparing these objects. Without loss of generality, we assume $0 \leq c_i(\cdot,\cdot) \leq 1$---a lower score indicates the objects are more similar with respect to the $i^{\rm th}$ criteria. The training phase of the algorithm consists of computing the $n:=\binom{N}{2}$ dyads
\begin{equation}\label{eq:dyad}
X_{i,j} = (c_1(Y_i,Y_j),\dots,c_d(Y_i,Y_j)) \in [0,1]^d,
\end{equation}
and constructing the Pareto depth function $U_n$ by applying nondominated sorting to the $n$ points $\{X_{i,j}\}_{i,j=1}^N$. Recall that $U_n(X_{i,j})=k$ if and only if $X_{i,j}$ belongs to the $k^{\rm th}$ Pareto front.

The testing phase of the algorithm receives a new object $Y$ and compares it to all training samples to create $N$ new dyads $Z_1,\dots,Z_N$ given by
\[Z_j = (c_1(Y,Y_j),\dots,c_d(Y,Y_j)).\]
Fix a number $k$, and let $I \subset \{1,\dots,N\}$ denote the indices of training samples that are among the $k$ nearest neighbors of $Y$ with respect to at least one similarity measure $c_i$. The anomaly score for $Y$ is
\begin{equation}\label{eq:anomaly_score}
\nu = \frac{1}{|I|} \sum_{j \in I} U_n(Z_i),
\end{equation}
and $Y$ is declared an anomaly when $\nu$ is larger than a predefined threshold $\rho>0$. We note that it is possible to allow different values of $k$ for each criterion. 
The idea is that nominal samples should be close to many of their nearest neighbors from the training data in one or many similarities, and thus the dyads $Z_1,\dots,Z_N$ will lie on earlier Pareto fronts. An anomalous sample should be far from its nearest neighbors in the training set in many or all of the similarities, and the dyads will consequently fall on deeper fronts.
The PDA anomaly detection algorithm has been validated on real and synthetic data in \cite{hsiao2012,hsiao2015} and has been shown to achieve state of the art results for integrating information from multiple similarities.

\section{New PDE continuum limits}
\label{sec:new}

The Hamilton-Jacobi Equation (HJE) continuum limit \eqref{eq:HJE} gives information about which Pareto front a sample lies on. It is also important in applications to know where a sample lies within its Pareto front, as this gives information about the trade-off between the multiple objectives. We present here some new PDE continuum limits for ordering of points \emph{within} Pareto fronts in dimension two.
\begin{figure}
\centering
\subfigure{\includegraphics[width=0.75\textwidth]{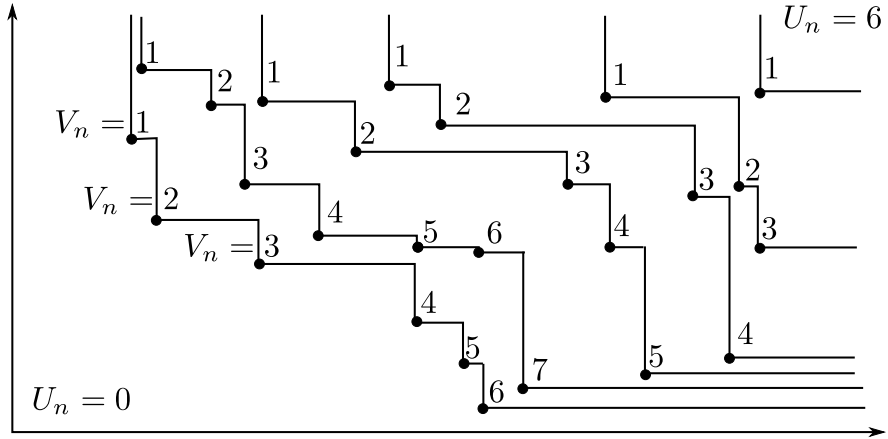}}
\caption{Illustration of the function $V_n$ that orders the points within each nondominated layer.} 
\label{fig:demo_Vn}
\end{figure}

Suppose $d=2$ and let $S_n=\{X_1,\dots,X_n\}$ be a sequence of \emph{i.i.d.}~random variables with continuous density $f$ on $(0,1)^2$. Apply nondominated sorting to $S_n$ and then order the points within each Pareto front by $x_1$-coordinate. This defines a function $V_n:S_n \to \N$ given by
\begin{equation}\label{eq:Vn}
V_n(X_i) := \text{Index of } X_i \text{ within its Pareto front}.
\end{equation}
Figure \ref{fig:demo_Vn} gives an illustration of $V_n$.  

By the continuum limit of nondominated sorting (Theorem 1), there are on the order of $n^\frac{1}{2}$ Pareto fronts. Since there are $n$ points in total, each front should have on the order of $n^\frac{1}{2}$ points. Therefore let us suppose that
\[n^{-\frac{1}{2}}V_n \longrightarrow v \ \ \ \text{ as } \ \  n \to \infty\]
uniformly with probability one, where $v:[0,1]^2 \to \R$ is continuously differentiable.    

Fix a large value of $n$ and consider a point $x\in (0,1)^2$. Fix $\eps>0$ and let
\[y = x + \eps \nabla^\perp u(x),\]
where $\nabla^\perp u = (u_{x_2},-u_{x_1})$. Since $\nabla^\perp u$ is tangent to the level set $\{u=u(x)\}$, we have $u(y)\approx u(x)$, i.e., $x$ and $y$ are roughly on the same Pareto front.  Let $A$ denote the rectangle whose diagonal is the line segment from $x$ to $y$, and let $L_n$ denote the number of points on the Pareto front passing through $x$ and $y$ that fall within $A$. See Figure \ref{fig:alongfronts} for an illustration of the setup. Then we have
\begin{equation}\label{eq:a}
\eps\nabla v(x) \cdot \nabla^\perp u(x) \approx v(y) - v(x)\approx n^{-\frac{1}{2}}(V_n(y) - V_n(x))=n^{-\frac{1}{2}}L_n.
\end{equation}
Here, $\nabla v=(v_{x_1},v_{x_2})$ denotes the gradient of $v$.
When $\eps>0$ is small, the random variables within $A$ are approximately uniformly distributed within $A$. Furthermore, as illustrated in Figure \ref{fig:alongfronts}, we can scale $A$ to the unit box $[0,1]^2$ without changing the partial ordering within $A$. Hence, it is reasonable to conjecture that $L_n \sim c\sqrt{m}$ as $n\to \infty$, where $c>0$ is a universal constant and $m$ is the number of samples falling in $A$. While the value of $c$ is not needed for sorting (since we perform a normalization in \eqref{eq:Wn_app} below), a simple scaling argument suggests that $c=1$, so we will take $L_n \sim \sqrt{m}$. By the law of large numbers
\[m \sim n\int_A f\, dx \approx n|A|f(x)= n\eps^2u_{x_1}u_{x_2}f(x),\]
since the side lengths of $A$ are $|x_1-y_1|=\eps u_{x_2}$ and $|x_2-y_2|=u_{x_1}$.
Combining this with $u_{x_1}u_{x_2} = f$, \eqref{eq:a} and $L_n \sim \sqrt{m}$ yields
\[\eps\nabla v(x) \cdot \nabla^\perp u(x) \approx f(x)\eps.\]
Hence this simple heuristic argument suggests that $v$ satisfies the linear transport equation
\begin{equation}\label{eq:T}
\boxed{\left\{\begin{aligned}
\nabla v \cdot \nabla^\perp u &= f&&\text{in } (0,1)^2\\
v&=0&&\text{on } \{x_2 = 1\}.
\end{aligned}\right.}
\end{equation}
Recall $u$ is the viscosity solution of \eqref{eq:HJE}. When $f=1$ we have $u(x)=2\sqrt{x_1x_2}$. If we plug this into \eqref{eq:T} and look for a separable solution of the form $v(x)=f_1(x_1)f_2(x_2)$ we find that when $f=1$
\begin{equation}\label{eq:exactv}
v(x)  = -\log(x_2)\sqrt{x_1x_2}.
\end{equation}
\begin{figure}
\centering
\boxed{\includegraphics[width=0.8\textwidth]{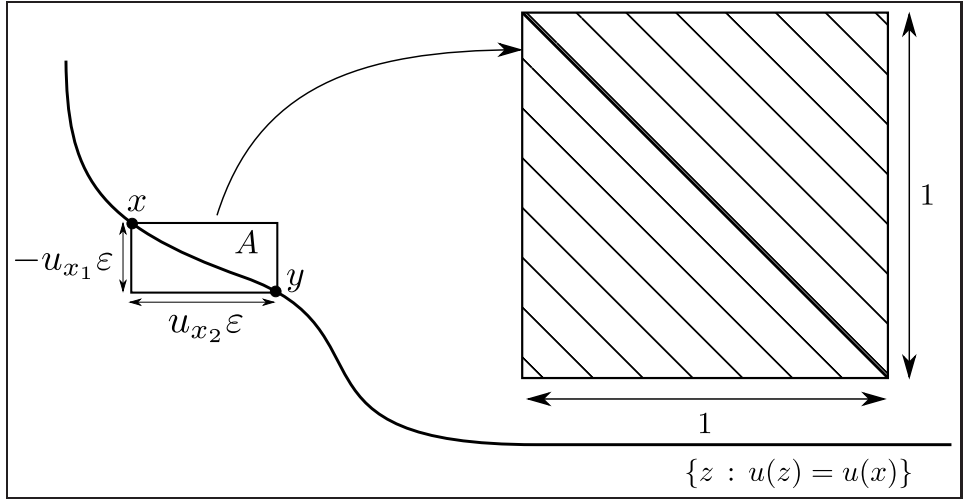}}
\caption{A depiction of some quantities from the derivation of the new continuum limit PDE.}
\label{fig:alongfronts}
\end{figure}

Since each Pareto front has in general a different number of points, the values of $V_n$ within different fronts are difficult to compare. Therefore it is natural to consider the following normalization:
\begin{equation}\label{eq:Wn_app}
W_n(X_i) := \frac{V_n(X_i)}{\# \F(X_i)},
\end{equation}
where $\F(X_i)$ denotes the Pareto front that $X_i$ belongs to. The quantity $W_n(X_i)$ is an index between 0 and 1 that gives information about where the point $X_i$ falls within its Pareto front.  The arguments above suggest that 
\[W_n \longrightarrow w \ \ \text{ uniformly with probability one, where}\]
\begin{equation}\label{eq:w}
w(x) = \frac{v(x)}{v(1,\psi(u(x))},
\end{equation}
and $\psi$ is the inverse of $x_2 \mapsto u(1,x_2)$. In other words, we are normalizing $v(x)$ by the asymptotic number of points on the front to which $x$ belongs. 

The expression in \eqref{eq:w} is difficult to work with numerically. We will instead derive a PDE for $w$. Differentiating \eqref{eq:w} yields
\[v\nabla w  = w\nabla v - w^2v_{x_2}\psi'(u)\nabla u.\]
Take the dot product of both sides with $\nabla^\perp u$ and recall \eqref{eq:T} to find that
\[v\nabla w \cdot \nabla^\perp u = w\nabla v \cdot \nabla^\perp u = wf.\]
Since $w=1$ on $\{x_1=1\}$, $w$ can be characterized as the solution of the following transport equation
\begin{equation}\label{eq:TW}
\boxed{\left\{\begin{aligned}
v\nabla w \cdot \nabla^\perp u &= wf&&\text{in } (0,1)^2\\
w&=1&&\text{on } \{x_1 = 1\}.
\end{aligned}\right.}
\end{equation}
We note that it would seem equally reasonable to have chosen the boundary condition $w=0$ on $\{x_2=1\}$ instead. However, in this case it is easy to verify that $w=v$ would solve \eqref{eq:TW}, so the solution is not uniquely determined by the boundary condition $w=0$ on $\{x_2=1\}$. This issue arises numerically as well. Indeed, we have found experimentally that if we solve \eqref{eq:TW} numerically with an upwind scheme and the boundary condition $w=0$ on $\{x_2=1\}$ we find the solver automatically computes $v$ instead of $w$, and it is unclear how to select the correct solution without changing the boundary condition to $w=1$ on $\{x_1=1\}$. It is impossible to specify both boundary conditions simultaneously since the characteristic curves, which are the level curves of $u$, flow through both boundaries.

Note that when $f=1$ we have $u(x)=2\sqrt{x_1x_2}$ and $v(x)=-\log(x_2)\sqrt{x_1x_2}$. Plugging these into \eqref{eq:TW} and using the method characteristics we find that for $f=1$
\begin{equation}\label{eq:wexact}
w(x) = \frac{\log(x_2)}{\log(x_1) + \log(x_2)}.
\end{equation}

See Figure \ref{fig:demo2} for a comparison of $V_n$ and $W_n$ to their continuum limits \eqref{eq:T} and \eqref{eq:TW}, respectively. While the arguments in this section are not rigorous, we present a convergence analysis in Section \ref{sec:conv_anal} that gives very strong numerical evidence for their validity. A rigorous proof is the subject of current investigation and is far outside the scope of this paper.
\begin{figure}
\centering
\subfigure[$V_n$ vs.~$v$]{\includegraphics[clip=true,trim=20 20 20 20,width=0.45\textwidth]{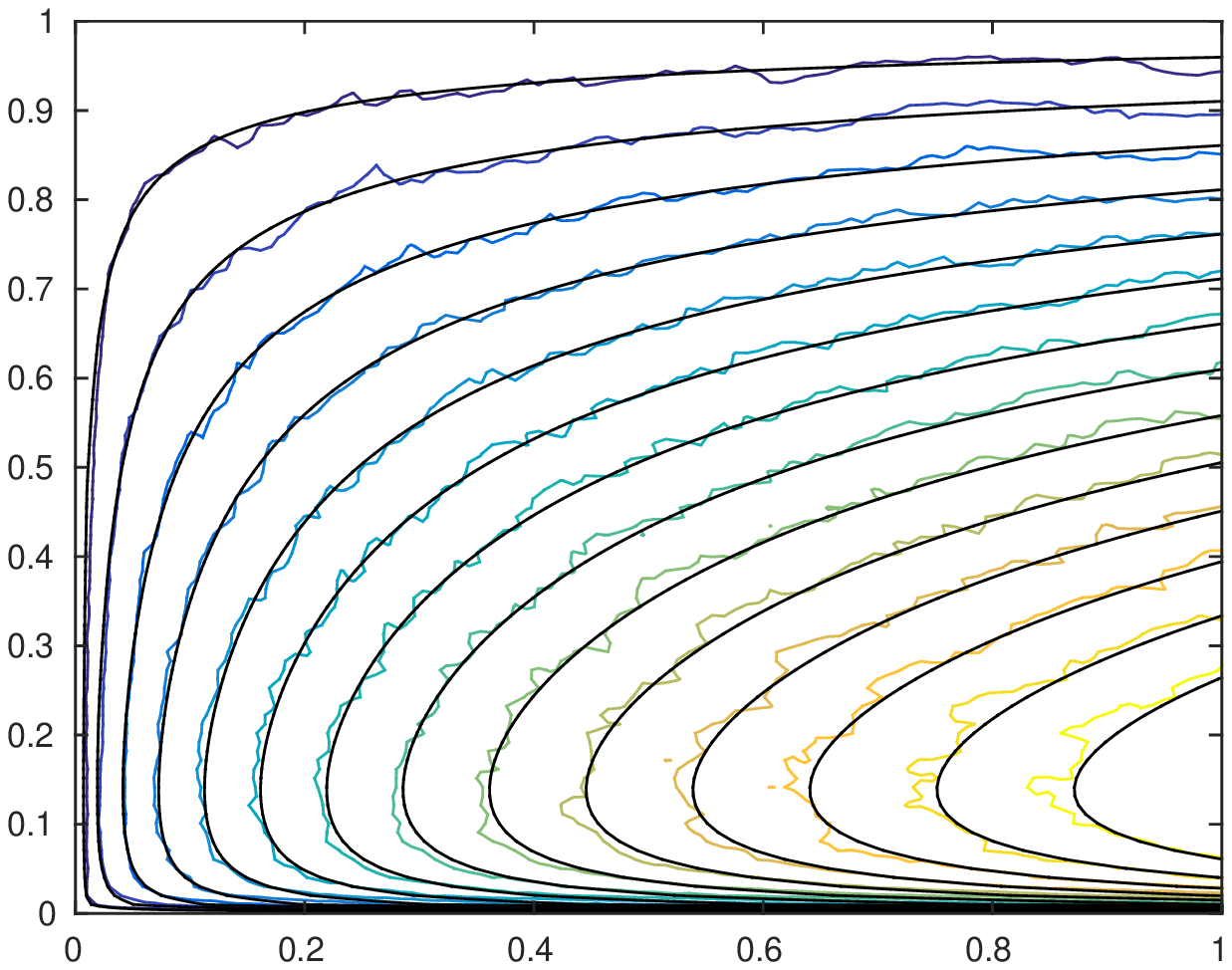}}
\subfigure[$W_n$ vs.~$w$]{\includegraphics[clip=true,trim=20 20 20 20,width=0.45\textwidth]{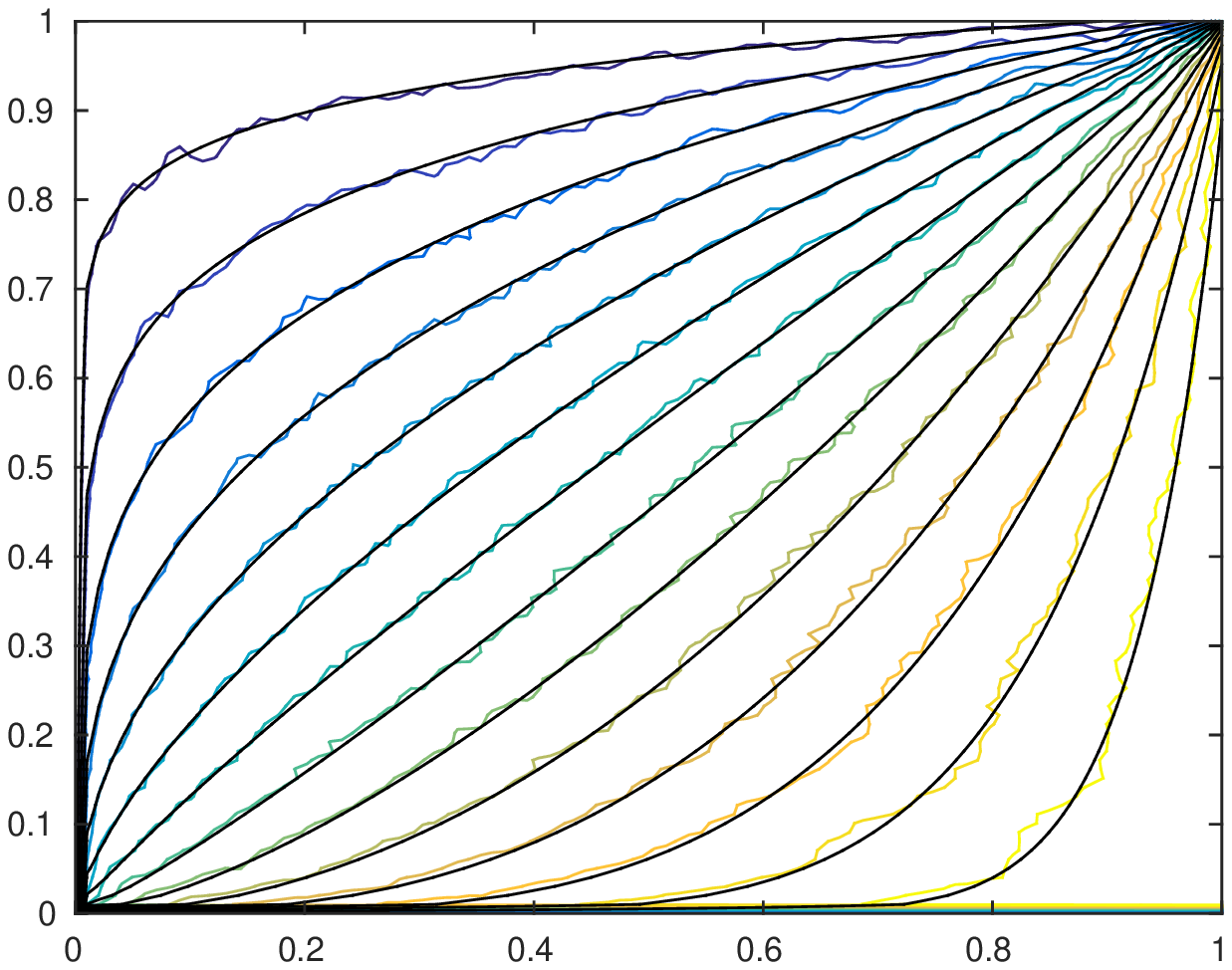}}
\caption{Results of a simulation comparing the level sets of $V_n$ and $W_n$ to the solutions of their continuum limits PDEs \eqref{eq:T} and \eqref{eq:TW}.} 
\label{fig:demo2}
\end{figure}

On a more technical note, since $u$ is not necessarily differentiable the velocity $\nabla^\perp u$ in the transport equations \eqref{eq:T} and \eqref{eq:TW} is not well-defined. To make sense of these PDE rigorously, we can instead write them in divergence form, since
\[-\text{div}(u\nabla^\perp v) = \nabla v \cdot \nabla^\perp u.\]
This suggests that it is possible to prove existence and uniqueness of weak solutions (defined via integration by parts) of \eqref{eq:T} in the Sobolev space $H^1$, under the requirement that $u$ is merely continuous. Such results are outside the scope of this paper, and we intend to pursue them in a future work.

\subsection{Convergence analysis}
\label{sec:conv_anal}

We present here a convergence analysis for the continuum limits \eqref{eq:T} and \eqref{eq:TW} in the case that $f\equiv 1$, i.e., the samples are independent and uniformly distributed on the unit box $[0,1]^2$. In this case we can solve all three PDEs \eqref{eq:HJE}, \eqref{eq:T}, and \eqref{eq:TW} in closed form using the formulas \eqref{eq:exactu}, \eqref{eq:exactv}, and \eqref{eq:wexact}, respectively.

We performed a convergence analysis by drawing $X_1,\dots,X_n$ independent and uniformly distributed on $[0,1]^2$ and computing $V_n$ and $W_n$ according to their definitions \eqref{eq:Vn} and \eqref{eq:Wn_app}, respectively. We measured the discrepancy with the continuum limits in the $\ell_1$ and $\ell_\infty$ norms, computed by
\[\|v - n^{-\frac{1}{2}}V_n\|_{\ell_1} := \frac{1}{n}\sum_{i=1}^n |v(X_i) - n^{-\frac{1}{2}}V_n(X_i)|.\]
and
\[\|v - n^{-\frac{1}{2}}V_n\|_{\ell_\infty} := \max_{1 \leq i \leq n} |v(X_i) - n^{-\frac{1}{2}}V_n(X_i)|,\]
respectively.
The definitions of $\|w - W_n\|_{\ell_\infty}$ and $\|w - W_n\|_{\ell_1}$ are similar. Figure \ref{fig:conv} shows the errors for a single realization and various values of $n$ ranging from  $n=10^2$ to $n=10^8$.  Each of the errors is observed to be converging to zero at a rate of $O(n^{-\alpha})$ where $\alpha \approx 0.25$, except for $\|w - W_n\|_{\ell_\infty}$. Upon closer inspection, the function $w(x) = \log(x_2)/(\log(x_1) + \log(x_2))$ is discontinuous at $x=(1,1)$, hence uniform convergence is impossible. This discontinuity reflects the fact that near $x=(1,1)$ the Pareto fronts cut off an infinitesimal portion of the top corner of the box, and hence $W_n$ transitions from $0$ to $1$ over an infinitesimally short distance.

\begin{figure}
\centering
\subfigure[$n^{-\frac{1}{2}}V_n$ vs $v$]{\includegraphics[clip=true, trim = 40 0 40 20, width=0.45\textwidth]{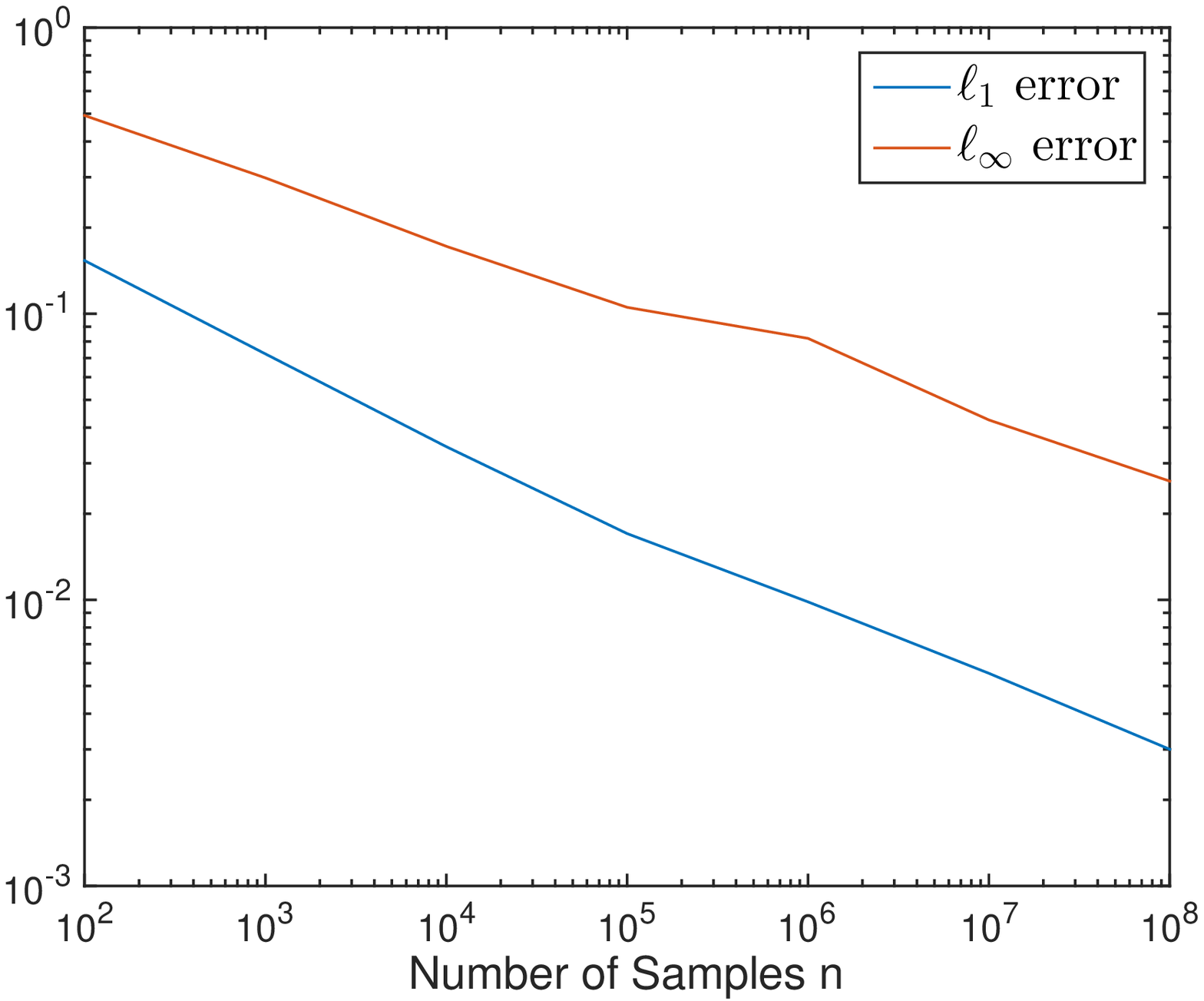}}
\subfigure[$W_n$ vs $w$]{\includegraphics[clip=true, trim = 40 0 40 20, width=0.45\textwidth]{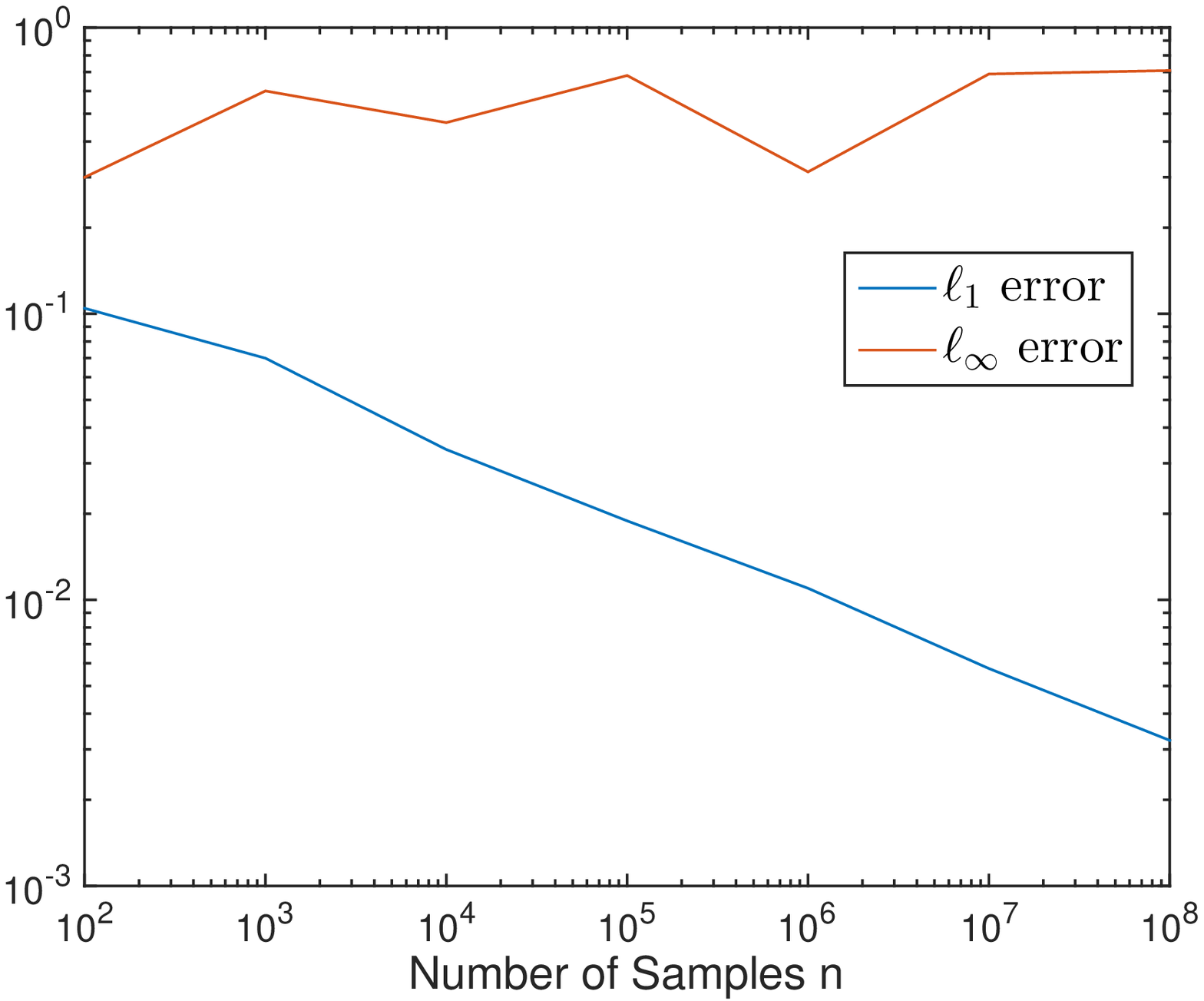}}
\caption{Convergence analysis of conjectured continuum limits. In (a), the error is consistent with $O(n^{-\alpha})$ where $\alpha\approx 0.3$ for the $\ell_1$ norm, and $\alpha\approx 0.2$ for the $\ell_\infty$ norm. In (b), the error for the $\ell_1$ norm is consistent with $O(n^{-\alpha})$ where $\alpha\approx 0.25$. Since $w$ is discontinuous at $x=(1,1)$, the convergence $W_n \to w$ \emph{cannot} be uniform (i.e., in the $\ell_\infty$ norm).}
\label{fig:conv}
\end{figure}

\section{Numerical schemes}
\label{sec:schemes}

We address in this section the problems of solving the PDEs \eqref{eq:HJE}, \eqref{eq:T}, \eqref{eq:TW} numerically. Since each PDE involves the solution of the previous PDEs, they must be solved in the order \eqref{eq:HJE}-\eqref{eq:T}-\eqref{eq:TW}. We solve \eqref{eq:HJE} numerically using the (formally) first order accurate upwind scheme presented in \cite{calder2015numerical}. The scheme is similar to fast sweeping, but only requires sweeping the grid exactly once, and thus has linear complexity in the number of grid points. 

Here, we show how to construct upwind finite difference schemes for the transport PDEs \eqref{eq:T} and \eqref{eq:TW}. Fix a grid resolution $h>0$ and for $U \subset \R^2$ define $U_h:=U\cap (h\Z^2)$.  We will solve the transport equations on the grid $[0,1]^2_h$. Both PDEs are degenerate when $\nabla u = 0$, which can only happen when $f$ vanishes (since $u_{x_1}u_{x_2}=f$). To avoid this degeneracy, we numerically precondition the density by replacing $f$ with $f + h^2$ before solving \eqref{eq:HJE} numerically.

The transport equation \eqref{eq:T} can be written out as
\[u_{x_2}v_{x_1} - u_{x_1}v_{x_2} = f \ \ \text{on } (0,1)^2\]
with $v(x_1,1)=0$. The unknown function is $v$; $u$ is obtained by solving \eqref{eq:HJE}. Since $u_{x_1}\geq 0$ and $u_{x_2}\geq 0$ the coefficients of $v_{x_1}$ and $v_{x_2}$ have opposite signs. Thus an upwind scheme will use either (A) backward differences for $v_{x_1}$ and forward differences for $v_{x_2}$, or (B) vice-versa. The choice depends on the direction we want information to propagate. Since our boundary condition is $v=0$ on $\{x_2=1\}$ and information flows along Pareto fronts in the positive $x_1$ direction and negative $x_2$ direction, the correct choice for an upwind scheme is (A) backward differences in $v_{x_1}$ and forward differences in $x_2$, which effectively forces the scheme to look backwards along the current Pareto front (or level set of $u$).

Thus, our upwind scheme for \eqref{eq:T} is
\begin{equation}\label{eq:T_scheme}
\boxed{\left\{\begin{aligned}
u_{x_2}D^-_1v_h - u_{x_1}D^+_2 v_h&=f&&\text{in } (0,1)^2_h,\\
v_h(0,x_2)=v_h(x_1,1)&=0.&&
\end{aligned}\right.}
 \end{equation}
where $v_h:[0,1]^2_h\to \R$ is the numerical solution, and $D^{\pm}_i$ are the finite differences defined by
\[D^{\pm}_iv(x) := \pm \frac{v(x\pm he_i) - v(x)}{h},\]
where $e_1=(1,0)$ and $e_2=(0,1)$.  At each grid point, \eqref{eq:T_scheme} is linear equation that is readily solved for $v_h(x)$ in terms of $v_h(x-he_1)$ and $v_h(x+he_2)$ to obtain
\begin{equation}\label{eq:vscheme}
v_h(x) = \frac{u_{x_2}(x)v_h(x-he_1) + u_{x_1}(x)v_h(x+he_2) + hf(x)}{u_{x_1}+u_{x_2}}.
\end{equation}
The numerical solution $v_h$ is computed by sweeping the grid $(0,1)^2_h$ exactly once in the upwind direction $(1,-1)$ starting with the boundary condition $v_h(0,x_2)=v_h(x_1,1)=0$. We note that the boundary condition $v_h(0,x_2)=0$ is just for numerical convenience, and is not used directly by the scheme, since $u_{x_2}(0,x_2)=0$ so on the line $x_1=0$ so the solution depends only on $v_h(x+he_2)$ when $x_1=0$.  When computing $v_h(x)$, we replace $u_{x_1}$ and $u_{x_2}$ in \eqref{eq:vscheme} by first order finite differences of the solution $u_h$ of \eqref{eq:HJE} on the same grid. Our numerical experiments suggest that the scheme is not sensitive to the choice of discretization of $u_{x_1}$ and $u_{x_2}$. We note that convergence of the scheme \eqref{eq:T_scheme} is a classical result when $u \in C^1$, however, $u$ is in general only H\"older continuous. We leave the analysis of the scheme for $u\not\in C^1$ to future work.

We now consider the second transport equation \eqref{eq:TW}. If we again write the PDE out we have
\[vu_{x_2}w_{x_1} - vu_{x_1}w_{x_2} = wf \ \ \text{in } (0,1)^2\]
with boundary condition $w(1,x_2)=w(x_1,0)=1$. Since $vu_{x_2}\geq 0$ and $vu_{x_1}\geq 0$ we have the same upwind choices as before. However, here we want information to propagate from the boundary where $x_1=1$ or $x_2=0$ backwards along Pareto fronts (level sets of $u$) in the direction $(-1,1)$. Thus we use forward differences for $w_{x_1}$ and backward differences for $w_{x_2}$, and our upwind scheme for \eqref{eq:TW_scheme} has the form
\begin{equation}\label{eq:TW_scheme}
\boxed{\left\{\begin{aligned}
v_hu_{x_2}D^+_1w_h - v_hu_{x_1}D^-_2 w_h&=w_h f&& \text{in } (0,1)^2_h,\\
w_h(1,x_2) = w_h(x_1,0) &=1.&&
\end{aligned}\right.}
\end{equation}
This is a linear equation that can be solved  for $w_h(x)$ in terms of $w_h(x+he_1)$ and $w_h(x-he_2)$ to obtain
\begin{equation}\label{eq:wscheme}
w_h(x) = \frac{v_h(x)u_{x_2}(x)w_h(x+he_1) + v_h(x)u_{x_1}(x)w_h(x-he_2)}{hf(x) + v_h(x)u_{x_2}(x) + v_h(x)u_{x_1}(x)}.
\end{equation}
This scheme can also be solved in a fast sweeping pattern visiting each grid point exactly once, and thus has linear complexity. Note that the boundary condition $w_h(x_1,0)=1$ is not directly used by the scheme, since $u_{x_1}(x_1,0)=0$. We impose the condition simply for convenience. We note here as well that convergence of the scheme is classical when $u \in C^1$, and we leave the case of $u \not\in C^1$ to future work.

\section{Real-time anomaly detection}
\label{sec:algorithm}

We propose here a modification of the PDA multicriteria anomaly detection algorithm~\cite{hsiao2015} to the setting of online streaming data. Suppose we have a stream of possibly nonstationary data $\{Y_t\}_{t \in \N}$, and $d$ measures of similarity $c_1,\dots,c_d$ for comparing data samples. As before we suppose that $0 \leq c_i(\cdot,\cdot) \leq 1$. In the streaming setting, we observe the data $Y_t$ sequentially and must determine whether $Y_t$ is an anomaly based only on the previous history $\{Y_s \, : \, s< t\}$. Due to memory and computational constraints, it may not be feasible to use this entire history, especially when $t$ is large. Therefore, we fix $T\geq 1$ and consider the windowed history
\begin{equation}\label{eq:Ht}
H_t = \{Y_s \, : \, t-T \leq s \leq t-1\}.
\end{equation}
We use the history $H_t$ as training data in order to determine whether $Y_t$ is an anomaly. Even without memory constraints, only the recent history $H_t$ can be considered reliable when the data is nonstationary.   As before, we define dyads
\[X_{r,s} = (c_1(Y_r,Y_s),\dots,c_d(Y_r,Y_s)) \in [0,1]^d\]
corresponding to every pair $(Y_r,Y_s)$ of the data stream. If we use the PDA anomaly detection algorithm with exact nondominated sorting, then we would need to store in memory all of the $n:=\binom{T}{2}=O(T^2)$ dyads corresponding to pairs from the history $H_t$. Since the addition of a single new sample can potentially affect the arrangement of \emph{all} the Pareto fronts, re-training the model when new samples are acquired requires applying nondominated sorting to \emph{all} $O(T^2)$ dyads, which has complexity slightly worse than $O(T^2)$ for memory and operations. This makes it impossible to update the model frequently in the streaming setting without considering some type of approximation to the sorting.

Using PDE-based ranking we can reduce this complexity to $O(T)$. We keep a running estimate of the marginal distribution of the dyads using the following kernel density estimator
\begin{equation}\label{eq:ker}
f_t(x) = \frac{1}{nh^d} \sum_{t-T \leq r < s \leq t-1} K\left(\frac{x-X_{r,s}}{h}\right).
\end{equation}
Although there are $O(T^2)$ terms in the sum above, the density estimation $f_t(x)$ can be updated recursively in $O(T)$ time by writing $f_t(x) = f_{t-1}(x) + g_t(x)$ where
\begin{equation*}
g_t(x) = \frac{1}{nh^d} \sum_{s=t-T}^{t-1}K\left(\tfrac{x-X_{s,t}}{h}\right) - K\left(\tfrac{x-X_{(t-T-1),s}}{h}\right).
\end{equation*}
In our experiments, we use a simple histogram estimator, which is a special case of \eqref{eq:ker}. We then compute an approximation $U_t$ of the Pareto depth function by solving the HJE \eqref{eq:HJE} numerically using the estimated density $f_t$. By Theorem \ref{thm:main} the continuum approximation of the anomaly score is
\begin{equation}\label{eq:approxscore}
\nu_t = \frac{1}{|I_t|}\sum_{s\in I_t} U_t(X_{s,t}),
\end{equation}
where $I_t\subset \{t-T,\dots,t-1\}$ denotes the indices of samples from the history $H_t$ that are among the $k$ nearest neighbors of $Y_t$ with respect to $c_i$ for at least one $i \in \{1,\dots,d\}$. We declare $Y_t$ anomalous if $\nu_t$ is greater than a predefined threshold $\rho$. The steps above work in arbitrary dimension $d\geq 2$ as well.

For the anomaly classification, we specialize to the case of $d=2$. If $Y_t$ is declared an anomaly, we then solve the transport equations \eqref{eq:T} and \eqref{eq:TW} using the schemes \eqref{eq:T_scheme} and \eqref{eq:TW_scheme}, respectively, to obtain $W_t:=w_h$. We define the \emph{anomaly classification score}
\begin{equation}\label{eq:class_score}
\mu_t = \frac{1}{|I_t|}\sum_{s\in I_t} W_t(X_{s,t}),
\end{equation}
and we declare $Y_t$ a $c_1$-anomaly if $\mu_t > 0.5$, and a $c_2$-anomaly if $\mu_t<0.5$. The idea is that if a sample is a $c_1$-anomaly, then the first coordinate of the dyads $c_1(Y_s,Y_t)$ should be larger on average than in the training set, which is our windowed history. Therefore, the dyads corresponding to $Y_t$ will be on average further to the right along each Pareto front. This corresponds to a front index larger than $0.5$ on average. The situation is similar for $c_2$ anomalies, except that the dyads will be biased towards the left side of the fronts. See Algorithm 1 for a summary of our algorithm in pseudocode.

\begin{algorithm}[h!]
\caption{PDE-based online anomaly detection}
\label{alg}
\begin{algorithmic}[1]
\STATE \textbf{Given:} $\rho>0$ and $T\in \N$
\STATE $f_T \gets$ \eqref{eq:ker} \COMMENT{Initialize density}
\FOR{$t = T+1 \to \infty$}
   \STATE $f_t \gets f_{t-1} + g_t$ \COMMENT{Update density estimation}
   \STATE $U_t \gets$\eqref{eq:HJE}  \COMMENT{Solve HJE continuum limit}
   \STATE $\nu_t \gets$  \eqref{eq:approxscore} \COMMENT{Compute anomaly score}
   \IF{$\nu_t > \rho$} 
      \STATE Declare $Y_t$ to be anomalous
      \STATE $W_t \gets$ \eqref{eq:TW_scheme} \COMMENT{Solve transport equations}
      \STATE $\mu_t \gets$  \eqref{eq:class_score} \COMMENT{Compute anomaly classification score}
      \STATE \textbf{if} $\mu_t > 0.5$ \textbf{ then } $Y_t$ is a $c_1$-anomaly
      \STATE \textbf{if} $\mu_t \leq 0.5$ \textbf{ then } $Y_t$ is a $c_2$-anomaly
   \ENDIF
\ENDFOR
\end{algorithmic}
\end{algorithm}

There are some obvious modifications we could make to improve the performance of the algorithm. First, the continuum limit PDE need not be solved at every iteration, and could instead be solved only periodically, or whenever the density estimation $f_t$ has substantially changed. Second, to keep track of a larger history without incurring additional costs, the history $H_t$ could contain $T$ elements equally (or randomly) spaced among the previous $\alpha T$ samples, where $\alpha \gg 1$. The algorithm would remain otherwise unchanged and the complexity of each iteration remains $O(T)$. 

Since the PDE-based anomaly detection algorithm is based on continuum approximations, it is natural to seek a quantification of the approximation error. If we assume the dyads are \emph{i.i.d.}~we can prove the following convergence rate.
\begin{theorem}[Convergence Rate]\label{thm:rate}
Let $X_1,\dots,X_n$ be \emph{i.i.d.}~with a Lipschitz continuous probability density $f:[0,1]^2 \to [m,\infty)$, where $m>0$. For $h>0$ let $\hat{u}_h$ denote the numerical solution of \eqref{eq:HJE} obtained via estimating $f$ from $X_1,\dots,X_n$ via a histogram aligned to the grid of spacing $h$ on $[0,1]^2$. Then there exist constants $C_1,C_2>0$ such that 
\begin{equation}\label{eq:bound}
\max_{[0,1]^2_h}|\hat{u}_h - u| \leq C_1\sqrt{h}
\end{equation}
holds with probability at least $1-\exp(-C_2nh^5-2\log(h))$, where $u$ is the viscosity solution of \eqref{eq:HJE}.
\end{theorem}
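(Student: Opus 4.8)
\emph{Proof proposal.}
The plan is to split $\hat u_h-u$ into a deterministic discretization error and a random density--estimation error, and to control the latter by exploiting the scaling symmetry of \eqref{eq:HJE}. Two structural facts drive the argument. First, \eqref{eq:HJE} is invariant under $f\mapsto\lambda f$, $u\mapsto\sqrt\lambda\,u$, because the operator $(p_1,p_2)\mapsto p_1p_2$ is $2$-homogeneous; the upwind scheme of \cite{calder2015numerical} shares this symmetry at the discrete level (one checks this directly from its definition). Second, both the viscosity solution and the scheme obey a comparison principle monotone in the right-hand side $f$ --- this is exactly the monotonicity of nondominated sorting under increasing point density recalled after Theorem \ref{thm:main}. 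Together these give a Lipschitz stability estimate: if $g$ is a density bounded below by $m>0$ and $\|\tilde g-g\|_\infty\le\eta\le m/2$, then $\tilde g\in[(1-\eta/m)g,(1+\eta/m)g]$ pointwise, hence (for the PDE, or for the scheme) $\sqrt{1-\eta/m}\,u^g\le u^{\tilde g}\le\sqrt{1+\eta/m}\,u^g$, so $\|u^{\tilde g}-u^g\|_\infty\le(\eta/m)\,\|u^g\|_\infty$. On $[0,1]^2$ we have $f\ge m$ and $\|u\|_\infty\le M_u<\infty$, so this applies with $g=f$.

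For the deterministic part, let $u_h$ be the numerical solution of \eqref{eq:HJE} obtained with the true (Lipschitz) density $f$ on the grid $[0,1]^2_h$. The a priori rate-of-convergence estimate for the upwind scheme in \cite{calder2015numerical} --- which yields an $O(\sqrt h)$ error for \eqref{eq:HJE}, whose viscosity solution is only H\"older-$\tfrac12$ --- gives $\max_{[0,1]^2_h}|u_h-u|\le C\sqrt h$ with $C$ depending only on $m$, $\|f\|_\infty$ and the Lipschitz constant $L$ of $f$. (Replacing $f$ by $f+h^2$, as in the scheme, perturbs $u_h$ by $O(h^2)$ via the stability estimate above, and is absorbed.) In particular $\|u_h\|_\infty\le 2M_u$ for $h$ small.

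For the random part, write $\hat f_h$ for the histogram estimate. On each of the at most $(1+h^{-1})^2$ grid cells $B$, the value $\hat f_h|_B=(nh^2)^{-1}\#\{i:X_i\in B\}$ has mean the $f$-average over $B$, so $\|\mathbb{E}\hat f_h-f\|_\infty\le\sqrt2\,Lh$ by Lipschitzness, while $\#\{i:X_i\in B\}$ is a sum of $n$ independent Bernoulli variables. Hoeffding's inequality gives $\P\big(|\hat f_h-\mathbb{E}\hat f_h|>t\text{ on }B\big)\le 2\exp(-2nt^2h^4)$; choosing $t=\sqrt h$ and taking a union bound over the cells yields $\|\hat f_h-f\|_\infty\le(\sqrt2\,L+1)\sqrt h$ with probability at least $1-2(1+h^{-1})^2\exp(-2nh^5)\ge 1-\exp(-C_2nh^5-2\log h)$, where the $O(h^{-2})$ prefactor and the factor $2$ are absorbed into $C_2$ in the only regime in which the claimed bound is not vacuous, namely $nh^5\ge c\log(1/h)$. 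On this event, $\hat f_h\in[(1-\delta)f,(1+\delta)f]$ pointwise with $\delta=(\sqrt2\,L+1)\sqrt h/m$, so the discrete scaling-and-comparison estimate of the first paragraph gives $\max_{[0,1]^2_h}|\hat u_h-u_h|\le\delta\,\|u_h\|_\infty\le 2M_u\delta$. The triangle inequality $|\hat u_h-u|\le|\hat u_h-u_h|+|u_h-u|$ then yields \eqref{eq:bound} with $C_1=C+2M_u(\sqrt2\,L+1)/m$.

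The only non-routine ingredient is the $O(\sqrt h)$ a priori error bound for the upwind scheme, which is where the exponent $\tfrac12$ in \eqref{eq:bound} originates and which I would quote from \cite{calder2015numerical}; its proof is a rate-of-convergence (doubling-of-variables) argument for viscosity solutions, complicated by the boundary condition on $\partial(0,1)^2\setminus(0,1]^2$ and by the degeneracy of the operator where $f$ vanishes (harmless here since $f\ge m$). The probabilistic estimate --- Hoeffding plus a union bound over the $O(h^{-2})$ cells, the source of both the $h^5$ in the exponent and the $-2\log h$ term --- and the density--stability estimate --- a one-line consequence of $2$-homogeneity and comparison --- are elementary; the only bookkeeping is keeping all constants independent of $n$ and $h$, and noting that the stated probability bound is vacuous unless $nh^5$ exceeds a fixed multiple of $\log(1/h)$.
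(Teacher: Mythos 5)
Your proposal is correct and follows essentially the same route as the paper: a bias--variance bound for the histogram via Hoeffding's inequality and a union bound over the $O(h^{-2})$ grid cells, combined with the $O(\sqrt h)$ discretization rate from \cite{calder2015numerical} and a stability estimate for the numerical solution with respect to perturbations of the right-hand side. The only cosmetic difference is that you derive the stability bound $\|\hat{u}_h - u_h\|_\infty \leq C\|\hat{f}_h - f\|_\infty$ yourself from the $2$-homogeneity of $(p_1,p_2)\mapsto p_1p_2$ together with comparison, whereas the paper simply quotes it as a standard maximum-principle argument from \cite{calder2015numerical}.
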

Theorem \ref{thm:rate} suggests we should choose $h$ as a function of $n$ so that $nh^5 \gg \log(h^{-1})$. In particular, if we choose $h=h(n)\to 0^+$ so that
\[\lim_{n\to \infty} \frac{nh^5}{\log(n)} = \infty,\]
then by the Borel-Cantelli Lemma  $\hat{u}_h \to u$ almost surely and uniformly on $[0,1]^2$ as $n\to \infty$. We also note that Theorem \ref{thm:rate} extends easily to higher dimensions $d\geq 3$. In this case the same convergence rate \eqref{eq:bound} holds with probability at least
\[1-\exp(-C_2nh^{2d+1}-d\log(h)).\]
\vspace{-2mm}
\begin{proof}[Proof of Theorem \ref{thm:rate}]
Let $X_1,\dots,X_n$ be independent and identically distributed random variables on $[0,1]^2$ with Lipschitz continuous density $f:[0,1]^2 \to [0,\infty)$. Recall that $f$ is assumed to be positive on $[0,1]^2$, i.e., there exists $m>0$ such that $f \geq m$. Let $h:=1/K>0$ be the grid resolution for solving \eqref{eq:HJE} numerically and for estimating the density $f$ with a histogram estimator, where $K\in \N$. For $1 \leq i \leq k$ and $1 \leq j \leq K$ let
\[B_{ij} = [(i-1)h,ih)\times[(j-1)h,jh)\]
denote the grid cell corresponding to $(i,j)$, and let $N_{ij}$ denote the number of samples from $X_1,\dots,X_n$ falling in $B_{ij}$. Then $N_{ij}$ is a Binomial random variable with parameters $n$ and
\[p_{ij} = \int_{B_{ij}} f(x) \, dx.\]
By the Chernoff-Hoeffding bound (see, e.g., \cite{dubhashi2009concentration}) we have
\begin{equation}\label{eq:chernoff}
\P\left(|N_{ij} - \E N_{ij}| \geq t\right) \leq \exp\left( \frac{-2t^2}{n}\right)
\end{equation}
for all $t\geq 0$. Let $x_{ij}=(ih,jh)$ denote the grid points. The histogram estimation of $f$ at grid point $x_{ij}$ is given by
\[\hat{f}_h(x_{ij}) := \frac{N_{ij}}{n|B_{ij}|} = \frac{N_{ij}}{nh^2}.\]
Combining this with \eqref{eq:chernoff} we have
\begin{equation}\label{eq:chernoff2}
\P\left(\left|\hat{f}_h(x_{ij}) - \E \hat{f}_h(x_{ij})\right| \geq t\right) \leq \exp\left( -2nh^4t^2\right)
\end{equation}
Since $\E \hat{f}_h(x_{ij}) = p_{ij}/h^2$ we have
\begin{equation}\label{eq:calc1}
\left|f(x_{ij}) - \E \hat{f}_h(x_{ij})\right| = \frac{1}{h^2}\left|\int_{B_{ij}} f(x_{ij}) - f(x) \, dx\right| \leq \frac{1}{h^2} \int_{B_{ij}} |f(x_{ij}) - f(x)| \, dx \leq Ch,
\end{equation}
due to the fact that $f$ is Lipschitz. Here, $C$ depends on the Lipschitz constant of $f$, which is defined by
\[\lip(f) = \sup_{x\neq y} \frac{|f(x) - f(y)|}{|x-y|}.\]
It follows from \eqref{eq:calc1} that
\begin{align*}
|f(x_{ij}) - \hat{f}_h(x_{ij})| &\leq  \left|\hat{f}_h(x_{ij}) - \E\hat{f}_h(x_{ij})\right| + \left|f(x_{ij}) - \E\hat{f}_h(x_{ij})\right|  \\
&\leq \left|\hat{f}_h(x_{ij}) - \E\hat{f}_h(x_{ij})\right|  + Ch.
\end{align*}
Combining this with \eqref{eq:chernoff2} and the union bound, there exists $C_1>0$ such that
\begin{equation}\label{eq:chernoff3}
\P\left(\|\hat{f}_h - f\|_\infty \geq \lambda\right) \leq \exp\left(-2nh^4(\lambda - C_1h)^2 - 2\log(h)\right),
\end{equation}
for all $\lambda > C_1h$, where
\[\|u - v\|_\infty := \max_{x_{ij} \in [0,1]^2_h} |u(x_{ij}) - v(x_{ij})|.\]

Let $u_h$ and $\hat{u}_h$ denote the numerical solutions of \eqref{eq:HJE} on the grid of spacing $h>0$ computed with $f$ and $\hat{f}_h$ on the right hand side, respectively. Standard maximum principle arguments (see \cite{calder2015numerical}) yield
\begin{equation}\label{eq:maxp}
\|\hat{u}_h - u_h\|_\infty \leq C\|\hat{f}_h - f\|_\infty,
\end{equation}
where the constant $C$ depends on the lower bound $m>0$ on $f$. By \cite[Theorem 1,2]{calder2015numerical}, there exists a constant $C>0$ such that
\[\|u_h - u\|_\infty \leq C\sqrt{h}.\]
Combining this with \eqref{eq:maxp} we have
\[\|\hat{u}_h - u\|_\infty \leq C_2(\|\hat{f}_h - f\|_\infty +  \sqrt{h}),\]
for some $C_2>0$.
By \eqref{eq:chernoff3}    
\[\P\left(\|\hat{u}_h - u_h\|_\infty \geq C_2(\lambda + \sqrt{h}) \right) \leq \exp\left(-2nh^4(\lambda - C_1h)^2 - 2\log(h)\right).\]
Setting $\lambda = C\sqrt{h}$ for large enough $C$ we have
\[\P\left(\|\hat{u}_h - u\|_\infty \geq C_3\sqrt{h} \right) \leq \exp\left(-C_4nh^5 - 2\log(h)\right),\]
for all $0 < h \leq 1$.
\end{proof}

\section{Numerical results}
\label{sec:numerics}

We present several experiments that provide numerical evidence supporting the above arguments and outlining the effectiveness of our algorithm. The first two experiments were performed using synthetic data from \cite{hsiao2012,hsiao2015}. The streaming experiments consist of $1500$ total samples with a window history of $T=500$. To underscore the adaptive nature of our algorithm, each of these experiments incurs a significant trend change in the middle of the stream. The third and final experiment was performed with a real pedestrian trajectory data set from a video surveillance problem.

To evaluate the performance of the streaming algorithm we use a Receiver Operating Characteristic (ROC) curve and its resulting Area Under the Curve (AUC). We consider how the AUC varies with time as the algorithm takes in points from a stream. When changing the trend in the simulated streams, we also accordingly change the data used to generate the ROC curves, thereby giving us an appropriate method to visualize the learning aspect of the algorithm. In each simulated data stream we evaluate both the anomaly detection and anomaly classification. The results presented below represent the average of 20 trials. All PDEs were solved on a $100\times100$ grid.

In each experiment, we compare our continuum limits against the exact sorting PDA algorithm from \cite{hsiao2012,hsiao2015} and we see little to no difference in anomaly detection performance. The PDE-approximations reduce the complexity by an order of magnitude---from $O(T^2)$ to $O(T)$. To give an idea of the difference in CPU time, each trial in the experiments below takes 27 seconds to process with the PDE-approximations, compared to 413 seconds with exact sorting. If we increase the stream length and data history $T$ by a factor of 3, the PDE-approximations take 160 seconds, while the exact sorting PDA algorithm takes over 9.3 hours.

\begin{figure}[h]
\centering
\subfigure[]{\includegraphics[clip=true,trim=15 0 20 20,width=0.5\textwidth]{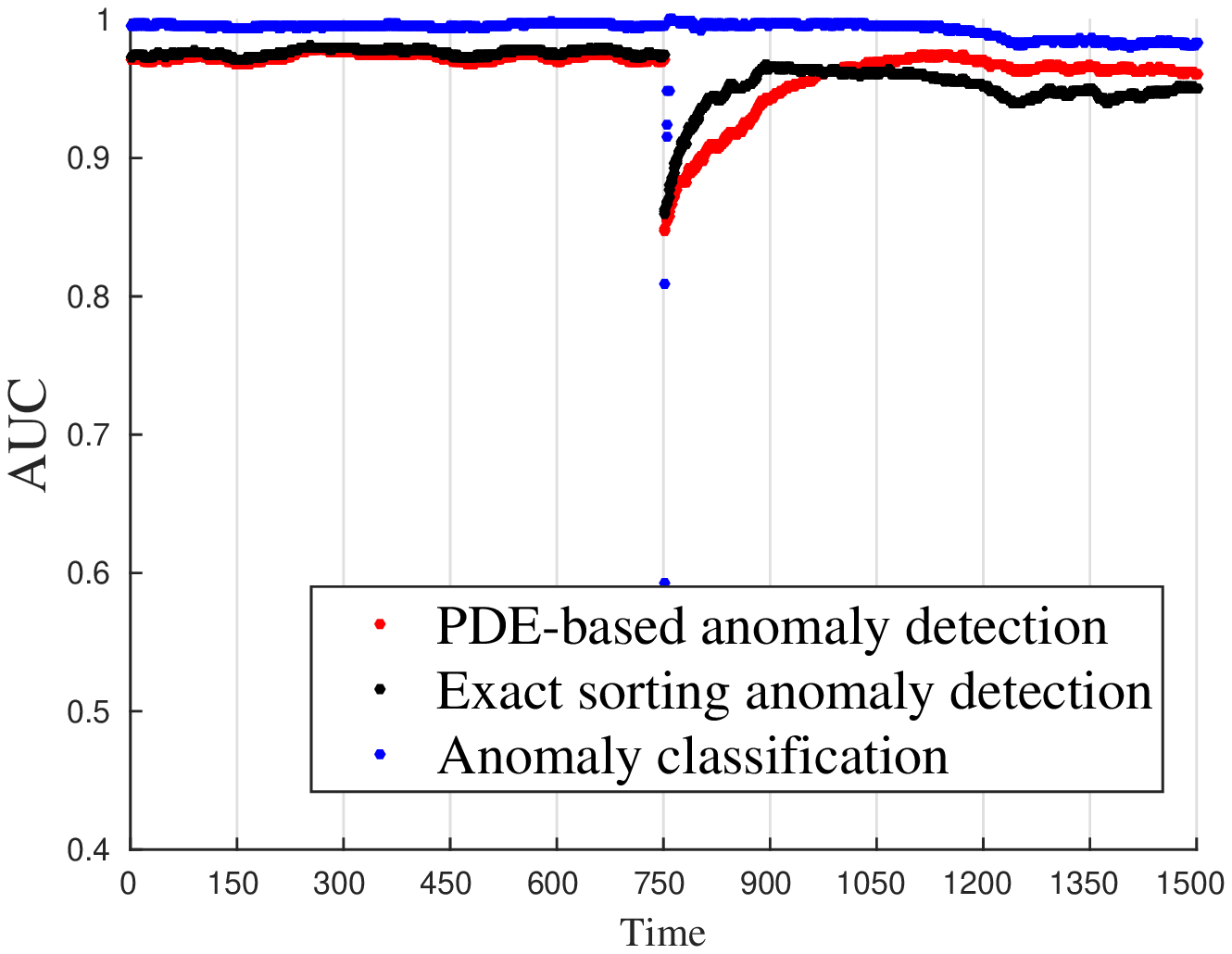}\label{fig:AUC_uniform}}
\subfigure[]{\includegraphics[clip=true,trim=15 0 20 20,width=0.5\textwidth]{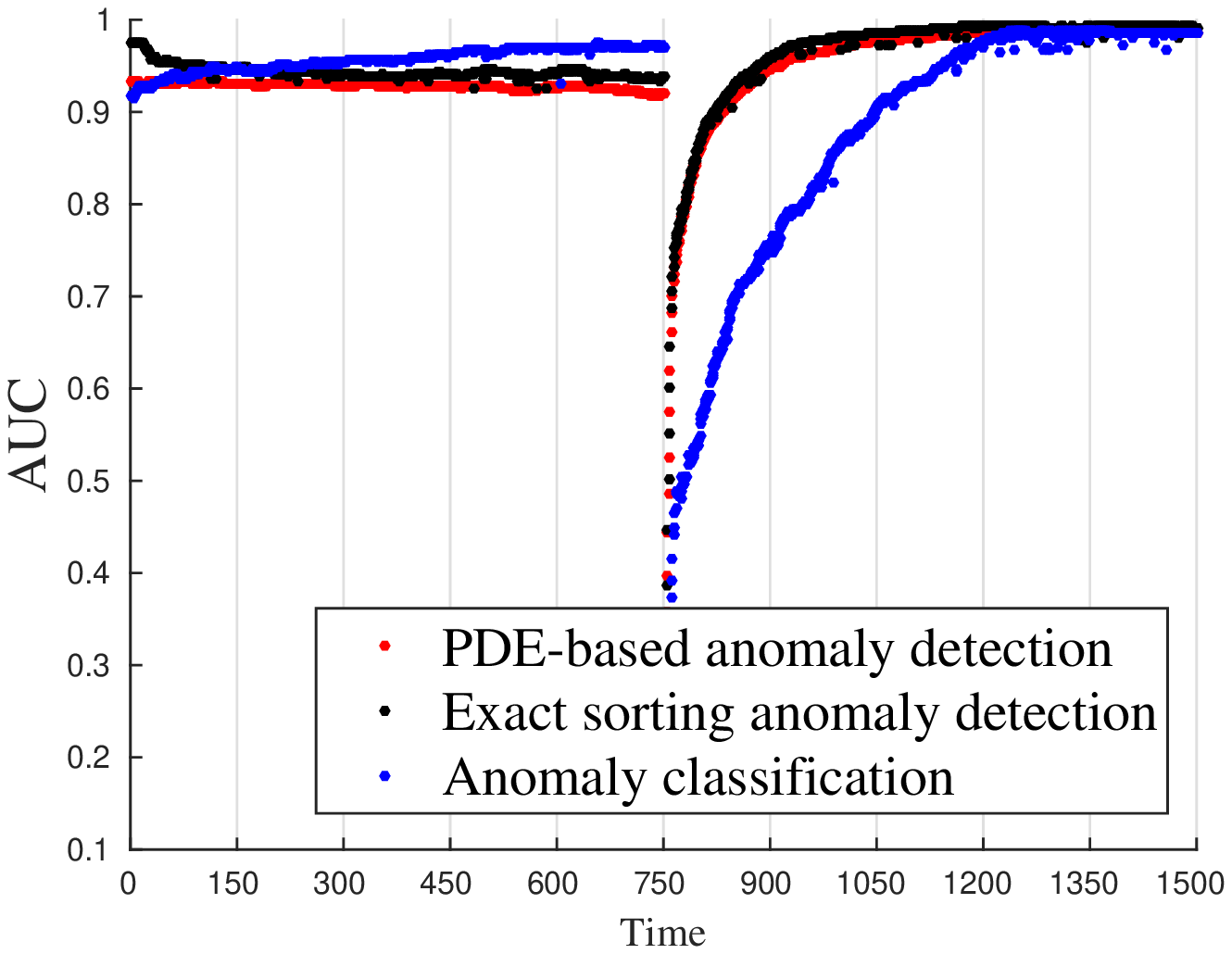}\label{fig:AUC_categorical}}
\subfigure[]{\includegraphics[clip=true,trim=15 0 20 20,width=0.5\textwidth]{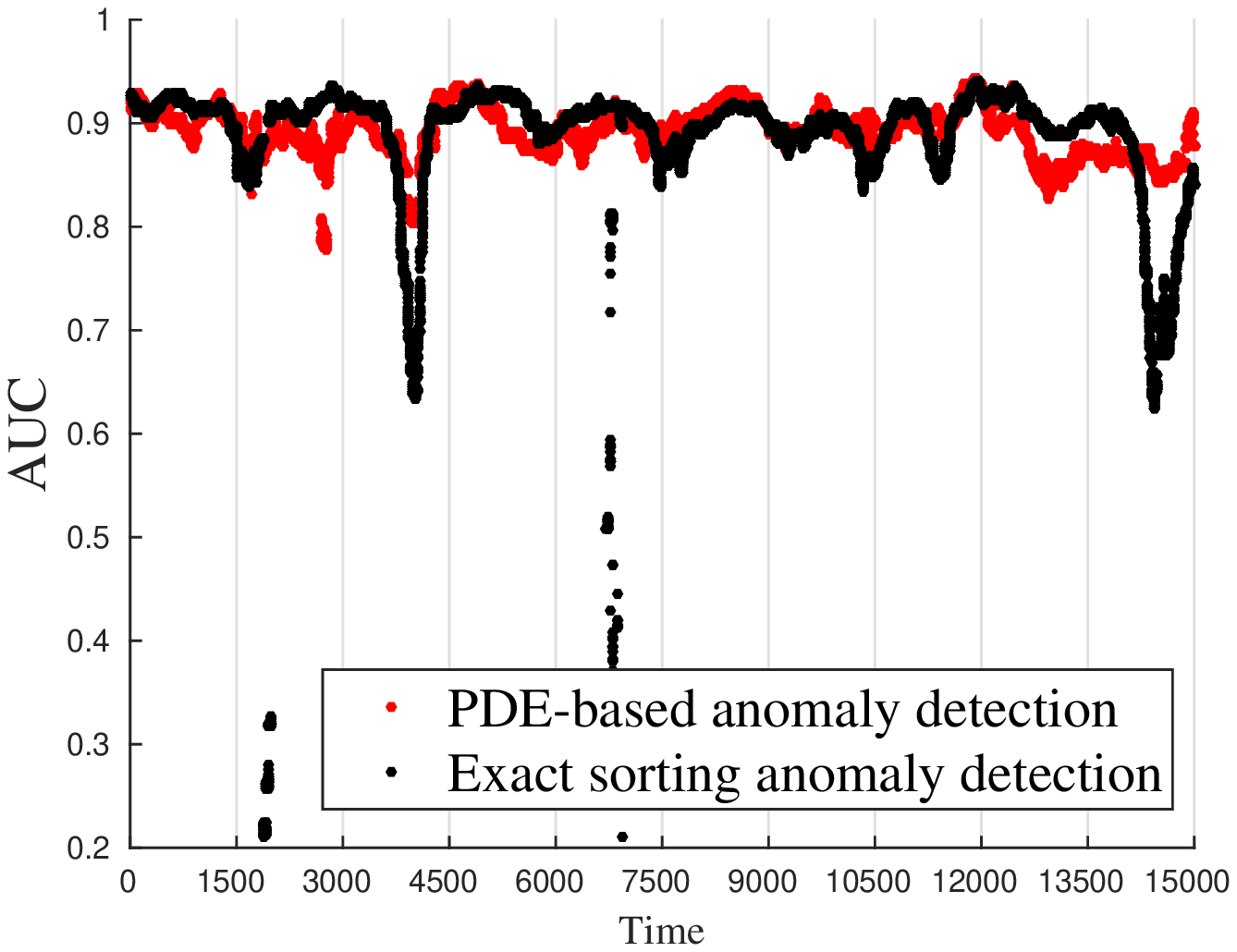}\label{fig:AUC_pedestrian}}
\caption{Results of simulated data stream with (a) uniformly distributed data and (b) categorical data. In (c) we show the AUCs for the pedestrian trajectories dataset.} 
\label{fig:AUC}
\end{figure}

\subsection{Uniformly distributed data}

The first experiment conducted with synthetic data took \emph{i.i.d.}~uniform samples on $[0,1]^2$ to be nominal, and uniform samples from the region $[0,1.1]^2\setminus[0,1]^2$ to be anomalous. Halfway through the stream the nominal region was changed to the box $[0,2]^2$, and the corresponding anomalous region was changed to $[0,2.2]^2\setminus[0,2]^2$. The two similarity criteria were simply taken to be the component-wise differences $|\Delta x_1|$ and $|\Delta x_2|$, respectively. The nearest neighbour parameters were chosen as $k_1=6,k_2=7$.  At each time step in the simulated stream there was a 0.05 probability of drawing from the anomalous region. 

Figure \ref{fig:AUC_uniform} shows the resulting AUCs at each time step. As expected, one can see a significant drop in the AUC of the anomaly detection at the mid-point when the trend is changed.  We observe a sharp recovery of the AUC of the anomaly detection once the training history $H_t$ contains a significant number of samples from the new distribution. This illustrates how the algorithm can quickly and efficiently learn a new trend in the data. Note that the AUC for the anomaly classification remains unchanged throughout the experiment because the classification of the anomalies in the new trend are the same with respect to the old trend.


\subsection{Categorical data}

\begin{figure}
\centering
\subfigure[]{\includegraphics[clip=true,trim=20 20 20 20,height=0.26\textheight]{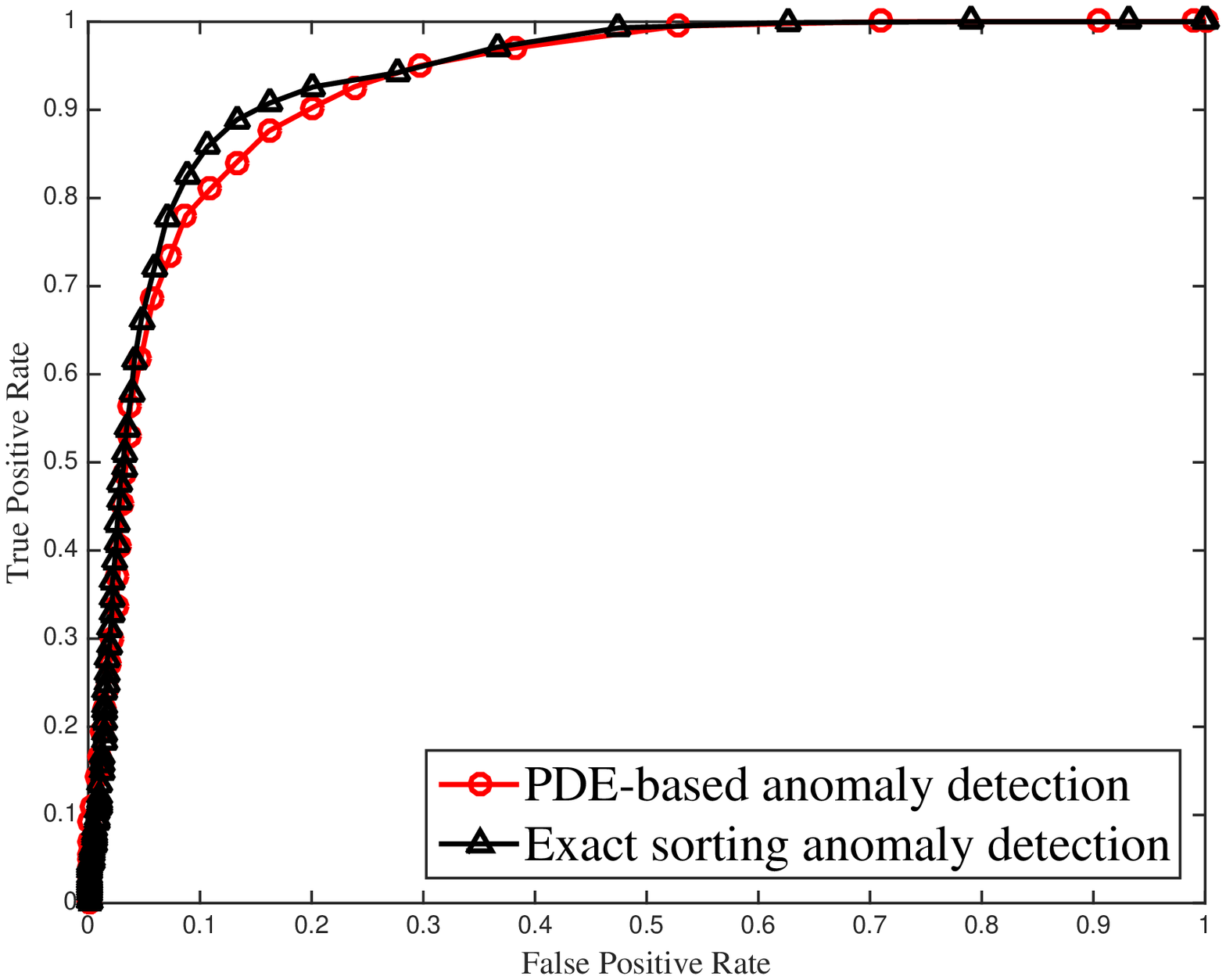}}
\subfigure[]{\includegraphics[clip=true,trim=20 20 20 20,height=0.26\textheight]{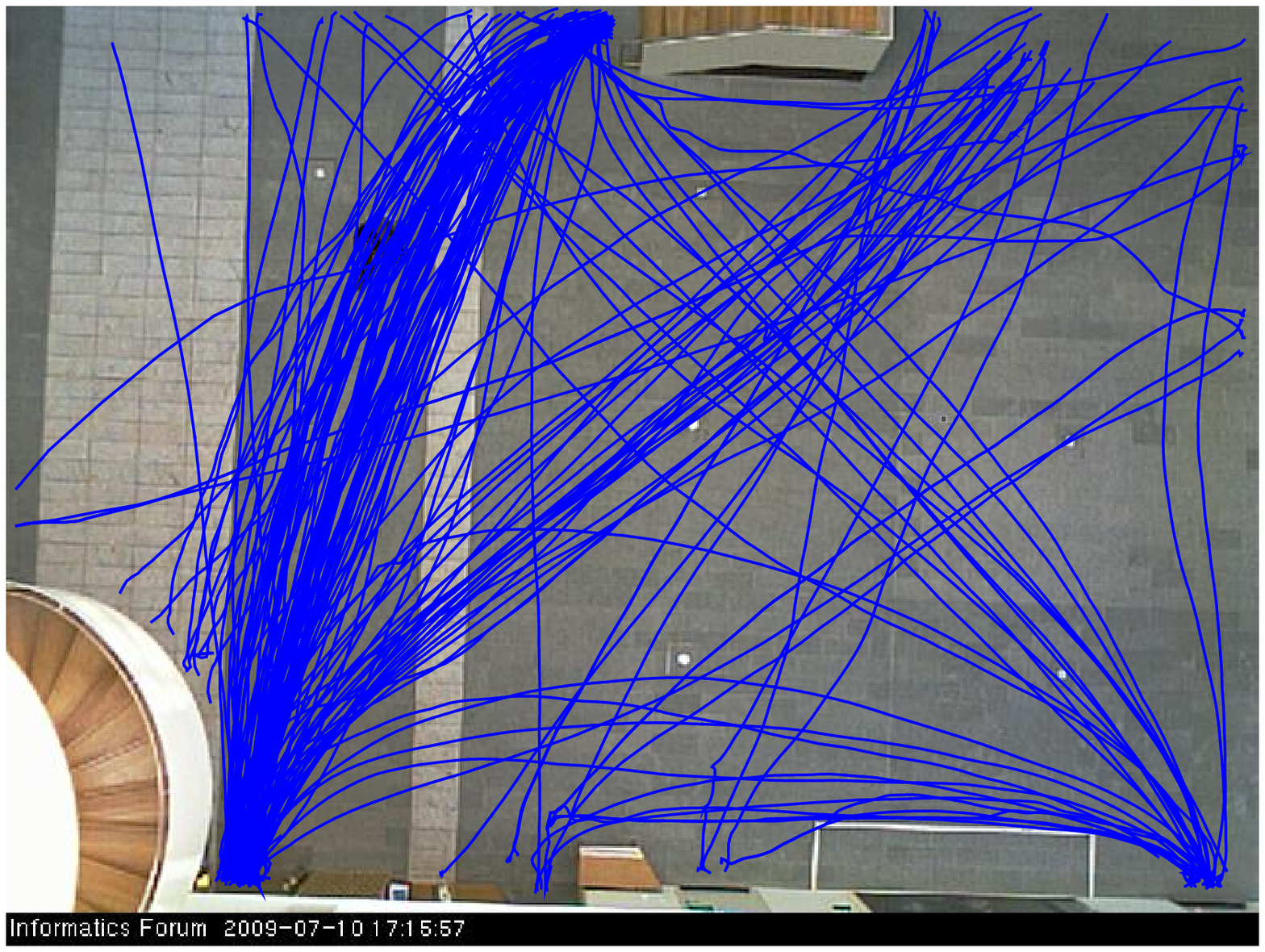}}
\subfigure[]{\includegraphics[clip=true,trim=20 20 20 20,height=0.26\textheight]{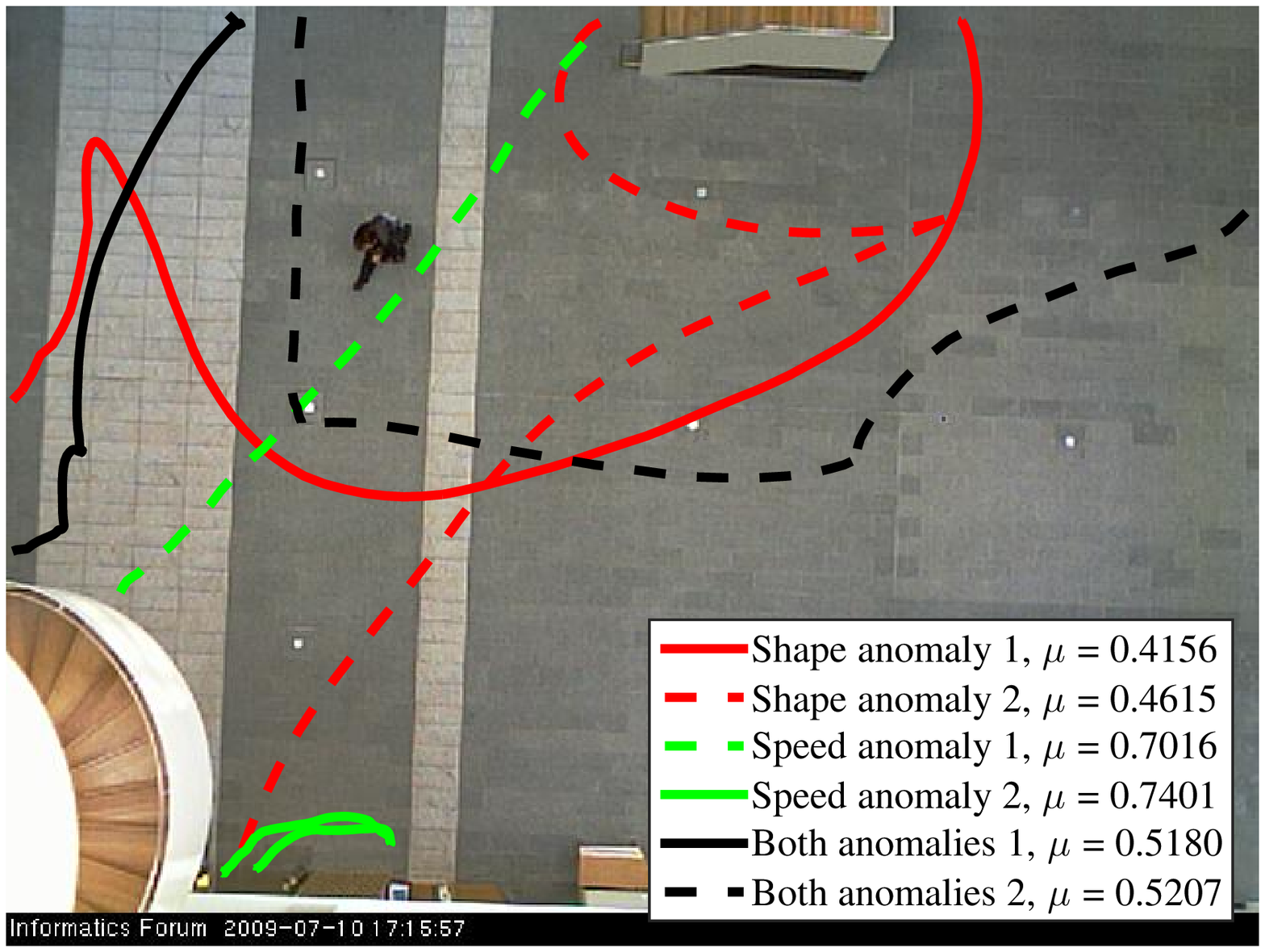}}
\caption{Pedestrian experiment in a stationary setting. (a) ROC curves of PDE-based and exact sorting anomaly detection, (b) trajectories classified as normal, and (c) some anomalous trajectories with their classifications.} 
\label{fig:pedestrian_traj}
\end{figure}
For the second experiment, we used the synthetic categorical data from \cite{hsiao2015}. Each sample consists of 2 groups of categorical data $A_1$ and $A_2$. Each group is comprised of 20 different attributes, where each attribute can assume a different number of values. The number of possible values for the $j^{\rm th}$ attribute of the $i^{\rm th}$ group, denoted $n_{i,j}$, is chosen uniformly at random between 6 and 10. Each attribute is then assigned a categorical distribution with parameters $p_1,\dots,p_{n_{i,j}}$ which are in turn drawn from a Dirichlet distribution with parameters $\alpha_{1},\dots,\alpha_{n_{i,j}}$. 

The nominal distribution is characterized by setting $\alpha_1=5$ and $\alpha_k=1$ for every $k\neq1$ for every attribute. This forces a bias towards attributes assuming the value one. For the anomalous distribution we set $\alpha_k=1$ for every $k$, so that no attribute has a bias towards assuming any particular value. Halfway through the stream the nominal distributions were changed so that for every attribute, the parameters of the categorical distribution were drawn from a Dirichlet distribution with parameters $\alpha_2=5$ and $\alpha_k=1$ for every $k\neq2$. This shifts the nominal bias towards the value two. The anomalous distribution was unchanged.  

To generate a nominal sample, we draw from the nominal distribution for each group. To generate an anomalous sample, we randomly choose a group with probability $0.5$ and draw from the anomalous distribution for that group, and nominal distribution for the other. At each time step in the stream there was a $0.05$ probability of drawing an anomalous sample.

The similarity between samples was computed between respective groups using the Inverse Occurrence Frequency (IOF) measure presented in \cite{boriah2008similarity}. The Goodall2 and Overlap metrics gave similar performance. The nearest neighbour parameters were chosen as $k_1=k_2=10$.  Figure \ref{fig:AUC_categorical} shows the resulting AUCs at each time step. Similar to the previous experiment we observe a drop in the AUC of the anomaly detection and a recovery thereafter. We also observe a similar drop in anomaly classification and the corresponding recovery. In contrast to the previous example, the new anomalies are anomalies in both criteria with respect to the old trend, so that the classification has no bias towards a specific criteria.

\subsection{Pedestrian trajectories}
Our third experiment consisted of data from a real pedestrian trajectory data set \cite{majecka2009statistical}, with over 100,000 trajectories. The first similarity criterion used to compare trajectories was their difference in shape, given by the $\ell_2$-distance between interpolated trajectories. The second was their difference in walking speed, given by the $\ell_2$-distance between the velocity histograms of each trajectory. 

As a preliminary experiment, we tested the anomaly detection and anomaly classification on 1666 trajectories from a single day. These trajectories were hand-labelled as normal or anomalous by \cite{hsiao2015}. In each experiment the training set consisted of 500 trajectories randomly drawn from a total of 1666 trajectories that day. The mean AUCs of the PDE-based and exact sorting based algorithms were $0.9274\pm0.0085$ and $0.9363\pm0.0072$, respectively and the ROC curves are shown in Figure \ref{fig:pedestrian_traj}(a). We observe very little difference between the exact sorting and the PDE-approximations.  We cannot present quantitative results for anomaly classification in this setting as there is no ground truth labeled data available. Along with some normal trajectories, we also plotted some anomalous trajectories with their respective classification scores in Figure \ref{fig:pedestrian_traj}(b,c). 

Finally, we applied the PDE-based streaming anomaly detection algorithm to a large portion of the pedestrian dataset, spanning over several days of data. Figure \ref{fig:AUC_pedestrian} shows the AUC as a function of artificial time for a simulated stream consisting of 15,000 trajectories with an initial training set of 400 randomly drawn trajectories. The small labeled portion of the dataset (approx.~1000 trajectories) was used to generate the ROC curves.

\section{Conclusion}
\label{sec:con}

In this paper, we showed how to use some recently discovered PDE continuum limits for nondominated sorting to perform anomaly detection and classification in real-time in a streaming setting. The classification is performed using new PDE continuum limits for ordering within the nondominated layers. We proved convergence rates for the continuum approximations and presented the results of numerical experiments with synthetic and real data that show our algorithm can adapt quickly and efficiently to a changing data stream. Although we focused in this paper on the anomaly detection problem, the ideas are not restricted to this context. Indeed, nondominated sorting is widely used in multiobjective optimization, and the ideas in this paper potentially apply to any such application, leaving many interesting problems for future work.

In particular, we outline below some directions for future work that we are currently investigating.

\begin{enumerate}
\item The arguments used in Section \ref{sec:new} to derive the new PDE continuum limits \eqref{eq:T} and \eqref{eq:TW} for sorting points within layers are not rigorous. We are currently investigating a rigorous proof of these conjectured continuum limits.  
\item The upwind finite difference schemes for the transport equations \eqref{eq:T} and \eqref{eq:TW} presented in Section \ref{sec:schemes} are provably convergent only when $u \in C^1$, which is not generally true, and only when we assume the exact values of $u_{x_1}$ and $u_{x_2}$ are used in the scheme. It would be interesting to prove convergence of these new upwind schemes without the assumption that $u \in C^1$, and under the more realistic condition that $u_{x_1}$ and $u_{x_2}$ are replaced by their finite difference approximations in the schemes for \eqref{eq:T} and \eqref{eq:TW}.
\item The PDEs for sorting points within fronts were presented in only $d=2$ dimensions here. It would be interesting to extend these results to higher dimensions. In $d\geq 3$ dimensions, there is no canonical linear ordering of the points within each front. Instead, we can consider nondominated sorting of the points within each front under a partial order that ``forgets'' about one of the coordinates $x_k$ (that is, we project to $\R^{d-1}$ by removing the $x_k$ coordinate and apply the usual nondominated sorting). This is akin to sorting the points within each front with respect to the $x_k$ direction. Thus, we have $d$ different ways to sort the points along each Pareto front, and similar arguments can be used to derive $d$ different continuum limit PDEs for sorting functions $v^1,\dots,v^k$ of the form
\[\prod_{i\neq k} \nabla v^k \cdot  \nabla^\perp_{k,i} u = u_{x_k}^{d-2}f \ \  \text{in } (0,1)^d,\]
with boundary condition $v^k=0$ on  $\{x_k=1\}$. Here, $\nabla^\perp_{k,i} u := u_{x_k}e_i - u_{x_i}e_k$, and $e_1,\dots,e_d$ are the standard basis vectors in $\R^d$. We are currently investigating applications of these continuum PDEs, as well as a rigorous proof in dimensions $d\geq 3$.
\end{enumerate}


\end{document}